\documentclass[twoside,10pt]{article}

\PassOptionsToPackage{authoryear}{natbib}

\usepackage[margin=0.75in]{geometry}

\usepackage{etoolbox}
\usepackage{ifxetex,ifluatex}
\makeatletter
\newcommand{\TNRswitch}{\fontfamily{ptm}\selectfont} 
\ifxetex
  \usepackage{fontspec}
  \renewcommand{\TNRswitch}{\fontspec{Times New Roman}}
\else\ifluatex
  \usepackage{fontspec}
  \renewcommand{\TNRswitch}{\fontspec{Times New Roman}}
\fi\fi
\def\name{\normalsize\mdseries\TNRswitch}
\patchcmd{\@maketitle}{\Large\bf}{\Large\mdseries\TNRswitch}{}{}%
\makeatother

\usepackage[abbrvbib]{jmlr2e}

\usepackage{amsmath}

\makeatletter
\let\p@lemma\relax
\let\p@definition\relax
\let\p@proposition\relax
\let\p@remark\relax
\makeatother
\newtheorem{lemma}{Lemma}
\newtheorem{definition}{Definition}
\newtheorem{proposition}{Proposition}
\newtheorem{remark}{Remark}

\ShortHeadings{Transformers in Distribution Regression}{Liu and Zhou}
\firstpageno{1}

\begin{document}

\title{Generalization Analysis of Transformers in Distribution Regression}

\author{\name Peilin Liu\,$^{1}$\thanks{Email: peilin.liu@sydney.edu.au}
       \quad
       \name Ding-Xuan Zhou\,$^{2}$\thanks{Email: dingxuan.zhou@sydney.edu.au} \\
       \vspace{1em}
       \addr $^{1,2}$School of Mathematics and Statistics,
       University of Sydney, Sydney, NSW 2006, Australia
       }

\maketitle
\pagestyle{plain}

\begin{abstract}%
In recent years, models based on the Transformer architecture have seen widespread applications and have become one of the core tools in the field of deep learning. Numerous successful and efficient techniques, such as parameter-efficient fine-tuning and efficient scaling, have been proposed surrounding their applications to further enhance performance. However, the success of these strategies has always lacked the support of rigorous mathematical theory. 

To study the underlying mechanisms behind Transformers and related techniques, we first propose a Transformer learning framework motivated by distribution regression, with distributions being inputs, connect a two-stage sampling process with natural language processing, and present a mathematical formulation of the attention mechanism called attention operator. We demonstrate that by the attention operator, Transformers can compress distributions into function representations without loss of information. Moreover, with the advantages of our novel attention operator, Transformers exhibit a stronger capability to learn functionals with more complex structures than convolutional neural networks and fully connected networks. Finally, we obtain a generalization bound within the distribution regression framework. Through the aforementioned theoretical results, we further discuss some successful techniques emerging with large language models (LLMs), such as prompt tuning, parameter-efficient fine-tuning, and efficient scaling. We also provide theoretical insights behind these techniques within our novel analysis framework.
\end{abstract}

\begin{keywords}
  Transformer, Neural Network, Generalization Bound, Distribution Regression
\end{keywords}

\section{Introduction}

Transformers \citep{vaswani2017attention,zhou2021informer,liu2021swin,choromanski2020rethinking,qincosformer} have undeniably become a fundamental component of modern deep learning models, extending the influence beyond the realms of natural language processing (NLP) and computer vision (CV). Transformer-based large models like GPT 4 \citep{OpenAI_4_2023},  demonstrate remarkable capabilities to process multimodal inputs with texts and images, and scientific research tools like AlphaFold \citep{jumper2021highly} are created to explore the patterns hidden in complex biological data. With the rapid developments of deep learning methods, numerous techniques for Transformers have been proposed to enhance the performance of LLMs across diverse applications. For example, techniques such as prompt tuning \citep{lester2021power,jia2022visual} and the integration of adapter modules \citep{houlsby2019parameter,hulora} are employed to adapt a pretrained LLM to new tasks at a low computational cost; As the size of the training text corpora increases dramatically, the network complexity can be efficiently scaled up using mixture of expert methods \citep{eigen2013learning,shazeer2016outrageously,jiang2024mixtral} for superior performance. Despite the impressive success in practical applications, there remains a deficiency in theoretical frameworks to demonstrate the reasons why Transformer-based models and those techniques work efficiently across diverse domains.

In recent years, mathematical theories around deep fully connected networks (FNNs) \citep{yarotsky2017error,schmidt2020nonparametric} and deep convolutional neural networks (CNNs) \citep{zhou2018deep,zhou2020universality,mao2023approximating} have been established to investigate their approximation and generalization abilities. However, for transformer-based networks, due to the complex input data structures and network architectures, it is challenging to build a theoretical framework to study the transformer structures and the phenomena observed in practical applications. In this paper, we establish a rigorous mathematical framework to demonstrate the learning capabilities of Transformers from a viewpoint of distribution regression, and also provide theoretical foundations and justifications for those practical techniques with Transformers. 

In the history of NLP, modeling problems in NLP with probabilistic tools is a classical approach. Here, we utilize a two-stage sampling process in distribution regression to formulate problems. In our distribution regression model, the inputs are distribution samples on the space $(P(\Omega), \gamma _{k})$, where $P(\Omega)$ denotes the set of all Borel probability measures defined on a compact subset $\Omega$ of $\mathbb{R}^{d}$ and $\gamma _{k}$ is a kernel embedding distance, also known as the maximum mean discrepancy (MMD) \citep{muandet2017kernel}. However, we assume that the distribution samples cannot be observed directly and that our observations are the data generated by a two-stage sampling process. In the first stage of sampling, a dataset $D=\{ (\mu _{i},y_{i}) \}_{i=1}^{m_{1}}$ is $i.i.d.$ sampled from a meta Borel distribution $\rho$ on $\mathcal{U} \times \mathcal{Y}$, where $\mathcal{U}=P(\Omega)$ and $\mathcal{Y}=\mathbb{R}$ is the output space. In the second stage of sampling, the dataset is $\hat{D}=\{ (\{ x_{i,j} \}_{j=1}^{m_{2,i}},y_{i}) \}_{i=1}^{m_{1}}$, where $\{ x_{i,j} \in \Omega \}_{j=1}^{m_{2,i}}$ are $i.i.d.$ sampled from the probability measure $\mu_{i}$, one of the first stage samples. We denote the empirical distribution of $\mu _{i}$ by $\hat{\mu}_{i}^{m_{2,i}}=\frac{1}{m_{2,i}} \sum_{j=1}^{m_{2,i}}\delta _{x_{i,j}}$, where $\delta _{x}$ is a Dirac measure. Then the second-stage sampling dataset can be denoted by $\hat{D}=\{ (\hat{\mu}_{i}^{m_{2,i}},y_{i}) \}_{i=1}^{m_{1}}$. By choosing an appropriate hypothesis space $\mathcal{H}$, the distribution regression scheme can be described as 

\begin{align} \label{eq:var}
   \varphi_{\hat{D},\mathcal{H}}:= \arg \min _{\varphi \in \mathcal{H} } \frac{1}{m_{1}} \sum_{i=1}^{m_{1}} \big(\varphi(\hat{\mu}_{i}^{m_{2,i}})-y_{i}\big)^{2}. 
\end{align}

The learning algorithm (\ref{eq:var}) for distribution regression is similar with the setting of domain generalization problems \citep{blanchard2021domain, holzleitner2024domain}. Both learning algorithms use the empirical distributions as prediction inputs. However, the key difference is that hypothesis functions in domain generalization also takes a sample as an input besides the target empirical distribution, and the learning scheme is formally defined as $\varphi'_{\hat{D},\mathcal{H}}:= \arg \min _{\varphi' \in \mathcal{H} } \frac{1}{m_{1}} \sum_{i=1}^{m_{1}}\frac{1}{m_{2,i}} \sum_{j=1}^{m_{2,i}} \big(\varphi'(\hat{\mu}_{i}^{m_{2,i}}, x_{i,j})-y_{i,j}\big)^{2}$ where $\mathcal{H}$ is a reproducing kernel Hilbert space defined on $P(\Omega) \times \Omega$, which is inconsistent with the sequential modelling case in NLP. For the case of NLP, the two-stage sampling process can be illustrated as follows: the first-stage sampling takes distribution samples at the semantic sentence level and the second stage samples tokens at the word level to express semantic meaning by forming a sentence.

Recently, some progress has been achieved in two-stage distribution regression with neural networks \citep{shi2023learning,yu2023deep}, none of which, however, matches the architecture of Transformers. Prior to the era of the prominence of neural networks, a well-known approach was based on kernel mean embedding techniques and kernel ridge regression \citep{szabo2016learning,fang2020optimal,yu2021robust}. All these works consider a regularized empirical risk minimization algorithm and that the regression function belongs to a function space characterized by the integral operator induced by a kernel function. Under assumptions on regularization of the function space and integral operator techniques, some nice generalization bounds are obtained. Afterward, the study of distribution regression focused on the application of neural networks. \citep{shi2023learning} and \citep{yu2023deep} proposed network architectures with FNNs and deep CNNs to learn the two-stage distribution regression respectively. These works metrize $P(\Omega)$ with a Wasserstein distance \citep{villani2009optimal} $W_{p}$ and learn functionals with only polynomial features. Yet, their methods don't integrate information of the input domain $(P(\Omega),W_{p})$ into the network structure and suffer from a potential information loss by only encoding polynomial features. To this end, we propose a two-stage distribution regression framework with Transformer-based network structures, which combines both advantages of the classical kernel embedding techniques and the neural network methods.

In this work, we investigate the learning capabilities of Transformer-based networks in a two-stage distribution regression framework. Our main contributions are listed as follows:
\begin{itemize}
    \item We first utilize a two-stage sampling process to understand the processing of Transformers with natural languages. We also propose a novel operator called attention operator to study the behavior of the attention layers in Transformers. We also prove that the attention operator can embed distributions into function representations, without any loss of information.
    \item We then introduce the architecture of Transformer encoders based on our novel attention operator and establish a rigorous distribution regression framework for Transformer-based networks. The approximation rate and generalization bound for the distribution regression problem are obtained, which exhibits the remarkable expressivity of Transformers in learning more diverse features than FNNs and CNNs.
    \item We provide a theoretical intuition for the choice of query sets in practical applications. There exists a universal choice for various tasks, though it may suffer from the curse of dimension in high-dimensional cases. This result justifies some task-specific tuning methods and cross-modal alignment tricks.  
    \item We establish theoretical insights for the design of FNN layers based on approximation and generalization analysis of Transformer encoders. We show that task-specific features are learned by the trainable FNN layer with a fixed attention layer, which provides theoretical foundations for adapter tuning with pretrained LLMs. We also illustrate that the complexity of the FNN layer should scale up with the training size of the text corpora to achieve a great generalization performance.  

\end{itemize}

The remainder of the paper is organized as follows. Section 2 provides the motivation and basic definitions of the learning problem. Within this section, subsection 2.1 introduces the basic structure of the vanilla Transformer \citep{vaswani2017attention} and gives the formal definition of our self-attention operator. Subsection 2.2 demonstrates the definitions of $(P(\Omega), \gamma _{k})$ and the structure of Transformer encoders for distribution regression. Subsequently, Section 3 contains the main results. First, we show in subsection 3.1 an approximation rate of a functional class induced by Barron functionals, by the Transformer-based network, then establish an oracle inequality and finally obtain a generalization bound for distribution regression in subsection 3.2. Based on the aforementioned theoretical results, we explain the principles behind the successful practical strategies, such as prompt tuning, adapters, and efficient scaling in Section 4. Finally, Section 5 presents the proof details of the main results.

\section{Motivation and Definitions}

In this section, we start with the attention operator, a fundamental component in our network structure, which is motivated by the self-attention layers in the vanilla Transformer.  The section then continues with the basic settings for the generalization analysis of distribution regression, focusing on the metric of the input space and the definition of our novel Transformer encoder for distribution regression.

\subsection{Attention operator}
The original Transformer was proposed for machine translation, consisting of an encoder and a decoder. However, with recent developments, it has become quite common to apply only a decoder (e.g., GPT \citep{radford2018improving}) or an encoder (e.g., BERT \citep{devlin2019bert}) in practical applications. In this work, we focus our analysis on a shallow Transformer encoder, which could be described as a composition of a self-attention layer and a position-wise fully connected layer. 
Now we present a mathematical definition of the Transformer introduced by \citep{vaswani2017attention}. Let $Q \in \mathbb{R}^{n \times d}$ and $Q^{T}= (x_{1},x_{2},\dots,x_{n})$ with $x_{i} \in \mathbb{R}^{d}$ for $1 \leq i \leq n$, indicating that $Q$ is an $n$-length input sequence with feature dimension $d$. The single-head attention is defined as 
$$
 \operatorname{SoftmaxAttn}(x_{i})=\sum_{j=1}^{n} \frac{\exp\left(\frac{{\langle W_{q}x_{i},W_{k}x_{j}  \rangle_{}}}{\sqrt{ d_{in} }}\right)}{\sum_{j'=1}^{n} \exp\left(\frac{{\langle W_{q}x_{i},W_{k}x_{j'}  \rangle_{}}}{\sqrt{ d_{in} }} \right)}(W_{v}x_{j})
$$ 
for each row vector $x_{i}^{T}$ in the input sequence $Q$, where $W_{k} \in \mathbb{R}^{d_{in}\times d},W_{q} \in \mathbb{R}^{d_{in}\times d},W_{v} \in \mathbb{R}^{d \times d}$ are parameter matrices. Then the output of a self-attention layer is defined as 
\begin{equation}
\operatorname{Softmax} \operatorname{Attn}(Q)=\left[\begin{array}{c}
\operatorname{Softmax} \operatorname{Attn}^T\left(x_1\right) \\
\vdots \\
\operatorname{Softmax} \operatorname{Attn}^T\left(x_n\right)
\end{array}\right] \in \mathbb{R}^{n \times d}
\end{equation}
and it is followed by a position-wise FNN in the form of 
$$
 \operatorname{FNN}(V)= \begin{bmatrix}
\sigma (v_{1}^{T}W_{1}+b_{1})W_{2}+b_{2} \\
\vdots \\
\sigma(v_{n}^{T}W_{1}+b_{1})W_{2}+b_{2}
\end{bmatrix} \in \mathbb{R}^{n \times d} \ \ \\ \textrm{for } V=\begin{bmatrix}
v_{1}^{T} \\
\vdots \\
v_{n}^{T}
\end{bmatrix} \in \mathbb{R}^{n\times d}
$$
where $W_{1} \in \mathbb{R}^{d \times d_{in}'}, W_{2} \in \mathbb{R}^{d_{in}'\times d}$ are connection matrices and $b_{1} \in \mathbb{R}^{ d_{in}'}, b_{2} \in \mathbb{R}^{d}$ are bias vectors.
So an encoder of the Transformer can be expressed as 
\[
 \operatorname{Encoder}(x_{i}) =  \sigma \left(\left( \frac{1}{n}\sum_{j=1}^{n} k_{attn}(x_{i},x_{j})(x_{j}^{T}W_{v})   \right)W_{1} +b_{1} \right)W_{2}+b_{2} \in \mathbb{R}^{1 \times d}
\]
where 
\begin{equation} \label{eq:k-attn}
    k_{attn}(x_i,x_j):= \frac{\exp\left(\frac{{\langle W_{q}x_{i},W_{k}x_{j}  \rangle_{}}}{\sqrt{ d_{in} }}\right)}{\sum_{j'=1}^{n} \exp\left(\frac{{\langle W_{q}x_{i},W_{k}x_{j'}  \rangle_{}}}{\sqrt{ d_{in} }} \right)}. \\
\end{equation}

  Hence the output of a Transformer encoder is 
 $$
\operatorname{Encoder}(Q) =  \begin{bmatrix}
\sigma \left(\left( \frac{1}{n} \sum_{j=1}^{n} k_{attn}(x_{1},x_{j})(x_{j}^{T}W_{v})   \right)W_{1} +b_{1} \right)W_{2}+b_{2} \\
\vdots \\
\sigma \left(\left(  \frac{1}{n}\sum_{j=1}^{n} k_{attn}(x_{n},x_{j})(x_{j}^{T}W_{v})   \right)W_{1} +b_{1} \right)W_{2}+b_{2}
\end{bmatrix} \in \mathbb{R}^{n \times d} \ \  \\
$$ for $Q^T=\begin{bmatrix}
x_{1} ,\cdots,x_{n}\end{bmatrix} \in \mathbb{R}^{d\times n}.$ 

\vspace{1em}

By examining each row of the output, it becomes clear that the encoder of Transformers has the structure of a fully connected layer following a self-attention operation. The self-attention operation is a weighted sum of the input features $\{W_{v}x_{j}\}_{j=1}^n$, where the weights are determined by a specific kernel function. Therefore, given a feature mapping $f:\mathbb{R}^d \to \mathbb{R}$ for the input sequence $Q \in \mathbb{R}^{n \times d}$ and a kernel function $k:\mathbb{R}^d \times \mathbb{R}^d \to \mathbb{R}$, the self-attention can be written as 
\[ \operatorname{k-Attn}(x_{i}) = \frac{1}{n}\sum_{j=1}^{n} k(x_{i},x_{j}) f(x_{j}). \]
In the original Transformer, $k(x_i,x_j)=k_{attn}(x_i,x_j)$ and $f(x)=W'_{v}x$ with $W_{v}' \in \mathbb{R}^d$.
Moreover, with the empirical distribution $\hat{\mu}^{n}=\frac{1}{n} \sum_{j=1}^{n} \delta _{x_{j}}$, we have
\[{\operatorname{k-Attn}(x_{i})} = \int k(x_{i},x) f(x) \, d \hat{\mu}^{n}. \] Then it follows that $\operatorname{k-Attn}(\cdot)$ is an empirical form of $\int k(\cdot,x) f(x)\, d\mu$ where $\mu$ is the probability distribution that $\{ x_{j} \}_{j=1}^{n}$ are drawn from. Consider $Q$ as a realization of the distribution $\mu$, i.e., a collection of samples drawn from $\mu$, and then, we define the attention operator on $P(\Omega)$ as 

\begin{equation} \label{eq:attn}
    \operatorname{attn}(\mu) = \int_{\Omega}  k(\cdot,x)f(x) \, d\mu  
\end{equation}
with some conditions on $k$ and $f$, specified in the subsequent section. In contrast to $\operatorname{SoftmaxAttn}(Q) \in \mathbb{R}^{n \times d}$ for $Q \in \mathbb{R}^{n \times d}$, the operator $\operatorname{attn}$ maps a probability measure to a function. However, a form similar to $\operatorname{SoftmaxAttn}(Q)$ can be obtained through a discretization of the function $\operatorname{attn}(\mu)$. 
To make the attention operator well-defined, in the next subsection we will introduce a metric on $P(\Omega)$, conditions on kernel $k$ and feature mapping $f$, and the form of functionals to be learned.

\subsection{Metric space of $P(\Omega)$ and network structure}
The choice of distances between probability measures is fundamental and has found many applications in deep learning. Many well-known distances share the following similar form. Let $\Omega$ be a compact subset of $\mathbb{R}^{d}$ and $P(\Omega)$ be the set of all Borel probability measures on $\Omega$. For $\mathcal{P},\mathcal{Q} \in P(\Omega)$, the distance $\gamma _{\mathcal{F}}$ between $\mathcal{P}$ and $\mathcal{Q}$ is defined as
$$
 \gamma _{\mathcal{F}}(\mathcal{P},\mathcal{Q}) = \sup _{g \in \mathcal{F}} \left| \int  _{\Omega} g \, d\mathcal{P} -\int  _{\Omega} g \, d \mathcal{Q}  \right| $$
 where $\mathcal{F}$ is a class of real-valued bounded measurable functions on $\Omega$. One can easily observe that $\gamma _{\mathcal{F}}$ satisfies all the conditions for a metric, except for one that $\mathcal{P}=\mathcal{Q}$ if $\gamma _{\mathcal{F}}(\mathcal{P}, \mathcal{Q})=0$. 
But with an appropriate choice of $\mathcal{F}$, $\gamma _{\mathcal{F}}$ can be made a metric on $P(\Omega)$, for example, $\mathcal{F}=C(\Omega)$ the space of continuous functions on $\Omega$, or $\{ g: \left\| g \right\|_{\infty}\leq 1 \}$ where $\left\| g \right\|_{\infty} = \sup _{x \in \Omega} \left| g(x) \right|$.  
It's particularly worth mentioning that the Wasserstein distance $W_{1}$ considered in \citep{yu2023deep,shi2023learning} is also an instance when $\mathcal{F}=\{ g: \left| g \right|_{C^{0,1}}  \leq 1\}$ with the Lipschitz semi-norm $|g|_{C^{0,1}}:=\sup _{x \neq y \ \in \Omega}  \frac{{\left| g(x)-g(y) \right|}}{\left\| x-y \right\|_{2}}$. 
\vspace{0.5em}

In this work, we consider $\mathcal{F}$ to be the unit ball of a reproducing kernel Hilbert space $\mathcal{H}_{k}$ with a reproducing kernel $k$ on $\Omega \times \Omega$, that is, $\mathcal{F}= \{ f \in \mathcal{H}_k:\left\| f \right\|_{\mathcal{H}_{k}} \leq 1 \}$, and denote $\gamma _{k}:=\gamma _{\{ f \in \mathcal{H}_k: \left\| f \right\|_{\mathcal{H}_{k}} \leq 1 \}}$. 
Throughout the paper, we always assume that $k$ is a Mercer kernel, that is, a symmetric, continuous and positive semi-definite kernel, on $\Omega \times \Omega$. 
Because $\int_{\Omega}\sqrt{k(x,x)} \,  d \mathcal{P'}(x) <\infty$ for any $P' \in P(\Omega)$, for any $\mathcal{P},\mathcal{Q} \in P(\Omega)$, it can be inferred from \citep{gretton2006kernel} that 
$$
 \gamma _{k}(\mathcal{P},\mathcal{Q})=\left\| \int  _{\Omega} k(x,\cdot) \, d\mathcal{P}(x) - \int  k(x, \cdot) \, d\mathcal{Q}(x)  \right\|_{\mathcal{H}_{k}}:= \left\| k_{\mathbf{P}}(\mathcal{P}) - k_{\mathbf{P}}(\mathcal{Q})\right\|_{\mathcal{H}_{k}}   $$
 where $k_{\mathbf{P}}(\mathcal{P}):= \int  _{\Omega} k(x, \cdot) \, d\mathcal{P}(x)$. Yet, with an arbitrary Mercer kernel $k$, $\gamma _{k}$ is not always  a metric on $P(\Omega)$, in other words, not always satisfying the condition that $\mathcal{P}=\mathcal{Q}$ if $\gamma _{k}(\mathcal{P},\mathcal{Q})=0$. Many studies have explored conditions on $k$ under which $\gamma _{k}$ becomes a metric on $P(\Omega)$. Here, we just present some conditions useful for the analysis later.

\begin{definition}
    Let $k$ be a Mercer kernel on $\Omega \times \Omega$ where $\Omega$ is a compact subset of $\mathbb{R}^{d}$.  
\begin{itemize}
    \item $k$ is said to be universal if $\mathcal{H}_{k}$ is dense in $C(\Omega)$.
    \item $k$ is said to integrally strictly positive definite if $\int _{\Omega}  \int _{\Omega} k(x,y) \, d\mu (x) \, d \mu(y)>0$ for all non-zero signed finite Borel measures $\mu$ defined on $\Omega$.
\end{itemize}
\end{definition}

In fact, the two definitions above are shown \citep{sriperumbudur2011universality} to be equivalent, but the second form of double integrals can be more useful in the proof later. It's also shown in \citep{gretton2006kernel} that for universal kernels, $k_{\mathbf{P}}$ defines an injective mapping from $P(\Omega)$ to $\mathcal{H}_{k}$, which implies that $\gamma _{k}$ is a metric on $P(\Omega)$. Because a lot of popular kernels in the application are universal, including Gaussian kernels, Laplacian kernels, inverse multiquadrics, Matérn kernels, it's natural to consider the attention operators induced by universal kernels. Now, we give the formal definition of our attention operator:

\begin{definition}
     Let $\Omega \subset \mathbb{R}^{d}$ be compact and $P(\Omega)$ be the set of all Borel probability measures defined on $\Omega$. Suppose that $k$ is a universal kernel on $\Omega \times \Omega$ and $f: \Omega \to \mathbb{R}$ is a continuous function with $c_{f} \leq | f(x) | \leq C_{f}$ for all $x \in \Omega$, where $c_{f},C_{f}>0$ are two constants. Then the attention operator $\operatorname{attn}: (P(\Omega), \gamma _{k}) \to (\mathcal{H}_{k}, \|\cdot\|_{\mathcal{H}_{k}})$ induced by $k$ and $f$ is defined as 
$$
 \operatorname{attn}(\mathcal{P}) = \int  _{\Omega} k(x,\cdot) f(x)\, d\mathcal{P}.$$
\end{definition}

 It is easy to see that for $\mathcal{P}\in (P(\Omega), \gamma _{k})$, 
 $$\left\| \operatorname{attn}(\mathcal{P}) \right\|_{\mathcal{H}_{k}} = \left(\int _{\Omega} \int  _{\Omega} k(x,y)f(x)f(y) \, d\mathcal{P}(x) \, d\mathcal{P}(y) \right)^{\frac{1}{2}} \leq r$$ with $r:=C_{f}\left\| k \right\|_{\infty}$, where $\left\| k \right\|_{\infty} := \sup _{x \in \Omega} \sqrt{ k(x,x) }$. Let $\mathcal{G}_{k,f}$ denote the image of $\operatorname{attn}$ in $\mathcal{H}_{k}$. Then $\mathcal{G}_{k,f}$ is contained in the closed ball $B_{r}:= \{ g \in \mathcal{H}_{k}: \left\| g \right\|_{\mathcal{H}_{k}} \leq r \}$.
The attention operator can be viewed as an embedding from distributions to function representations. The following theorem shows some nice properties of this distribution embedding.  

\begin{theorem}
    Let $k',k$ be two universal kernels defined on $\Omega \times \Omega$, and $f$ defined in Definition 2. Then the attention operator $\operatorname{attn}$ induced by $k$ and $f$ is an injective and continuous mapping from $(P(\Omega), \gamma _{k'})$ to $(\mathcal{H}_{k}, \|\cdot\|_{\mathcal{H}_{k}})$.
\end{theorem}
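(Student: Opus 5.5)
Injectivity and continuity are handled separately. Both rest on the identity
\[
\|\operatorname{attn}(\mathcal{P})-\operatorname{attn}(\mathcal{Q})\|_{\mathcal{H}_k}^2=\int_\Omega\int_\Omega k(x,y)f(x)f(y)\,d(\mathcal{P}-\mathcal{Q})(x)\,d(\mathcal{P}-\mathcal{Q})(y).
\]
This follows from the reproducing property, since $\operatorname{attn}(\mathcal{P})-\operatorname{attn}(\mathcal{Q})=\int k(x,\cdot)\,d\nu(x)$ with $\nu:=f\,(\mathcal{P}-\mathcal{Q})$. The integral is a Bochner integral in $\mathcal{H}_k$, well defined because $\|k(x,\cdot)\|_{\mathcal{H}_k}=\sqrt{k(x,x)}\le\|k\|_\infty$ and $f$ is bounded.

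\emph{Injectivity.} Suppose $\operatorname{attn}(\mathcal{P})=\operatorname{attn}(\mathcal{Q})$. Then $\nu=f(\mathcal{P}-\mathcal{Q})$ is a finite signed Borel measure with $\iint k\,d\nu\,d\nu=0$. Since $k$ is universal, by \citep{sriperumbudur2011universality} it is integrally strictly positive definite (Definition 1). Hence $\nu=0$. Because $|f|\ge c_f>0$ and $f$ is continuous, $1/f$ is a bounded continuous function. Therefore $\mathcal{P}-\mathcal{Q}=(1/f)\,\nu=0$, that is, $\mathcal{P}=\mathcal{Q}$. This is the step where the lower bound $c_f$ is used.

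\emph{Continuity.} Let $\gamma_{k'}(\mathcal{P}_n,\mathcal{P})\to0$. I would first show that this implies weak convergence $\mathcal{P}_n\rightharpoonup\mathcal{P}$. Since $k'$ is universal, $\mathcal{H}_{k'}$ is dense in $C(\Omega)$. For $g\in C(\Omega)$ and $\varepsilon>0$, choose $h\in\mathcal{H}_{k'}$ with $\|g-h\|_\infty<\varepsilon$. Then
\[
\left|\int g\,d\mathcal{P}_n-\int g\,d\mathcal{P}\right|\le 2\varepsilon+\|h\|_{\mathcal{H}_{k'}}\gamma_{k'}(\mathcal{P}_n,\mathcal{P}),
\]
so $\int g\,d\mathcal{P}_n\to\int g\,d\mathcal{P}$. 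Next, since $\Omega$ is compact, $F(x,y):=k(x,y)f(x)f(y)$ is continuous on $\Omega\times\Omega$. Weak convergence of $\mathcal{P}_n$ implies weak convergence of the product measures $\mathcal{P}_n\otimes\mathcal{P}_n\rightharpoonup\mathcal{P}\otimes\mathcal{P}$ and $\mathcal{P}_n\otimes\mathcal{P}\rightharpoonup\mathcal{P}\otimes\mathcal{P}$. To see this, approximate $F$ uniformly by finite sums of products $a(x)b(y)$ via Stone--Weierstrass; for products the convergence is immediate. Expanding the identity above into four terms $\iint F\,d\mathcal{P}_n d\mathcal{P}_n-2\iint F\,d\mathcal{P}_n d\mathcal{P}+\iint F\,d\mathcal{P}d\mathcal{P}$, each term converges to $\iint F\,d\mathcal{P}\,d\mathcal{P}$. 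Hence the squared norm tends to $0$, which gives continuity. Sequential continuity suffices because $(P(\Omega),\gamma_{k'})$ is a metric space: $\gamma_{k'}$ is a metric since $k'$ is universal.

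The main subtlety is that $k'$ and $k$ differ, so there is no direct Lipschitz bound between $\gamma_{k'}$ and the $\mathcal{H}_k$-norm. Routing the argument through weak convergence bypasses this, and the density of $\mathcal{H}_{k'}$ in $C(\Omega)$ is the key input. An alternative is to note that $\gamma_{k'}$ metrizes the weak topology on the compact space $P(\Omega)$, so the identity map to the weak topology is a continuous bijection from a compact space; but this presupposes the weak convergence fact anyway, so I would use the direct argument above.
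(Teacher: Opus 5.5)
Your proof is correct and follows the paper's route: the same double-integral identity, the same use of integral strict positive definiteness of $k$ on the signed measure $f\,(\mathcal{P}-\mathcal{Q})$ for injectivity, and continuity obtained by passing through weak convergence. The one difference is in the continuity step: the paper shows $\tilde{k}(x,y)=f(x)k(x,y)f(y)$ is universal and invokes its lemma that universal kernels metrize the weak topology, whereas you prove the two needed implications directly and never need universality of $\tilde{k}$; your Stone--Weierstrass argument on product measures also gives a genuine proof of the weak-to-$\gamma$ direction, which the paper's lemma only dismisses as trivial.
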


The kernel $k'$ that metrizes $P(\Omega)$ can be a different universal kernel from the one inducing the attention operator. It has little effect on the properties of the attention operator. However, if we take $k'(x,y)$ to be $f(x)k(x,y)f(y)$ mentioned in the proof of Theorem 1, the attention operator is an isometry between $(P(\Omega), \gamma _{k'})$ and $(\mathcal{G}_{k,f}, \|\cdot\|_{\mathcal{H}_{k}})$, which shows that the attention operator can represent an embedding without any loss of information. But for simplicity of notations, we take both kernels to be the same universal kernel.  

Next, we define the Transformers based on our novel attention operator. As mentioned in the Introduction, $\operatorname{SoftmaxAttn}(Q)$ can be regarded as a discretization of the function $\operatorname{attn}(\mathcal{P})$. Here, for any set of distinct points $\mathbf{T}=\{ t_{1},\dots, t_{|\mathbf{T}|} \} \subset \Omega$, we introduce a sampling operator $[\cdot]_{\mathbf{ T}}: \mathcal{H}_{k} \to \mathbb{R}^{|\mathbf{T}|}$ to discretize $\operatorname{attn}(\mathcal{P})$ where $|\mathbf{T}|$ denotes the size of the set $\mathbf{T}$, such that for any $g \in \mathcal{H}_{k}$, $[g]_{\mathbf{T}}=[g(t_{j})]_{1 \leq j \leq |\mathbf{T}|} \in \mathbb{R}^{|\mathbf{T}|}$. Then we give the following rigorous definition of our Transformer encoder.

\begin{definition}
    Let $\Omega$ be a compact subset of $\mathbb{R}^{d}$, $k$ be a universal kernel, and $\operatorname{attn}$ be the attention operator defined in Definition 2. With distribution inputs from $(P(\Omega), \gamma _{k})$, the Transformer encoder $H_{n_{1},n_{2}}$ of type $(n_{1},n_{2})$ is defined by:
\begin{equation} \label{eq:tran}
     H_{n_{1},n_{2}}(\mathcal{P})= c^{T} \sigma (A[\operatorname{attn}(\mathcal{P})]_{\mathbf{T}}+b)+b_{0} 
\end{equation}
 where $\mathbf{ T}$ is a set of $n_{2}$ points in $\Omega$, $A \in \mathbb{R}^{n_{1}\times n_{2}}$ a parameter matrix, $b \in \mathbb{R}^{n_{1}}$ a bias vector, $c \in \mathbb{R}^{n_{1}}$, $b_{0} \in \mathbb{R}$ and $\sigma$ is ReLU activation function given by $\sigma(x)=\max \{ 0,x \}$.
\end{definition}

\begin{remark}
With the definitions of the attention operator and the Transformer encoder above, there are some points we wish to clarify. First, the boundedness from below condition of  the continuous feature mapping $f$ can be removed in some applications, where a feature mapping is usually learned by a neural network. For example, $f(x)=\sum^{n}_{j}c_{j}(\sigma(a_{j} \cdot x+b_{j})+\epsilon _{j})$ with all $\epsilon_j >0$, then we have 
$$
\int_{\Omega} k(x, \cdot) f(x) \, d\mathcal{P} = \sum_{j=1}^{n}c_{j}\int_{\Omega} k(x, \cdot) (\sigma(a_{j} \cdot x+b_{j})+\epsilon _{j}) \, d\mathcal{P}
$$
and with the universality of shallow nets, there's no loss of the expressivity. Together with Theorem 1, we can observe that the composition structure of attention layers and FNN layers is crucial to the success of Transformers in embedding probability measures into function representations and learning diverse feature representations, which is also consistent with the Transformer's powerful data compression capabilities in practical applications.
\end{remark}

\begin{remark} \label{re:sa}
Note that when the input is an empirical distribution $\hat{\mu}^n = \frac{1}{n}\sum_{j=1}^{n}\delta_{x_j}$, let $\mathbf{T}'= \{x_j\}_{j=1}^n$ and then $[\operatorname{attn}(\hat{\mu}^n)]_{\mathbf{T}'}$ is exactly the self-attention. For a functional $H_{n_{1},n_{2}}$ defined on $(P(\Omega), \gamma _{k})$, generally speaking, $n_{2}$ controls the degree of discretization and $n_{1}$ controls the accuracy of approximation to the target functional. With a larger $n_{2}$, there is less information loss from the original distribution. In the application, this may also explain why engineers always manage to increase the length of the input sequence (i.e., the number of tokens) for the self-attention module in LLMs.
\end{remark}

\section{Main Results}

This section provides the main results on learning capabilities of Transformer-based networks in distribution regression. First we define a functional class induced by Barron functionals and an example to illustrate its powerful expressivity, and then demonstrate the approximation rate of the defined functional class by a Transformer encoder. With the approximation rate and an estimation of covering numbers, we obtain a generalization bound by an oracle inequality in the final subsection. 

The attention operator embeds each Borel probability measure into a function in the closed ball $B_{r}$ in $\mathcal{H}_{k}$. Then within the framework of distribution regression, we require a functional that maps functions in $B_{r}$ to $\mathbb{R}$. Here, we consider a functional class produced by Barron functionals \citep{barron1993universal} and the definition is given below.

For a real-valued functional $\Phi$ on a Hilbert space $(\mathcal{H}, \|\cdot\|)$, we say that $\Phi$ is represented by a Fourier distribution $\tilde{F}$ on some domain $A \subset \mathcal{H}$ where $\tilde{F}$ is a complex-valued measure $\tilde{F}(d \omega)=e^{i \theta(\omega)}F(d \omega)$ if $\Phi(g)=\int_{\mathcal{H}}e^{i \langle g,\omega \rangle}\tilde{F}(d \omega)$ for all $g \in A$. Here $F(d \omega)$ denotes that magnitude distribution and $\theta(\omega)$ denotes the phase at the frequency $\omega$.

\begin{definition}
    For each $r,C >0$, let $\Gamma_{r,C}(\mathcal{H})$ be the set of functionals $\Phi$ on $B_{r}:=\{ g \in \mathcal{H}: \|g\| \leq r  \}$ such that there's a Fourier distribution $\tilde{F}$ representing $\Phi$ on $B_{r}$ satisfying $\int_{\mathcal{H}}  \|\omega\|F (d \omega) \leq  C$. Every functional in $\Gamma _{r,C}(\mathcal{H})$ is called a Barron functional.
\end{definition}

We shall assume that the target function (regression function) in distribution regression has the form
$$
\Phi(\operatorname{attn}(\mathcal{P}))=\Phi\left( \int  _{\Omega }k(x, \cdot)f(x) \, d\mathcal{P} \right) \textrm{ where }\Phi \in \Gamma _{r,C}(\mathcal{H}_{k}).$$
To demonstrate the powerful expressivity of the Barron functional class, we provide an example here.

\begin{example}
Consider the ridge functional $\Phi(x)=g(\langle  a,x \rangle_{\mathcal{H}_{k}})$ with $\| a \|_{\mathcal{H}_{k}}=1$ and some univariate continuous function $g$ with Fourier transform $\hat{g}$ on $\mathbb{R}$ satisfying $\int_{\mathbb{R}}|t||\hat{g}(t)| \, dt < \infty$. Then $\Phi(x) = \int  e ^{it\langle a,x \rangle_{\mathcal{H}_{k}}} \hat{g}(t)\, dt$, which means that a Fourier distribution $\tilde{F}$ supported on the set of $\{ ta:t\in \mathbb{R} \}$, represents $\Phi$ on $B_{{r}}$ for any ${r}>0$. $\Phi$ is a Barron functional on $B_{{r}}$, as long as $g$ is a Barron functional on $\mathbb{R}$. It is shown in \citep{barron1993universal} that $g$ is a Barron functional on $[-{r}, {r}]$ when the second derivative of $g$ is continuous. In this case, $\Phi$ is a Barron functional. When the input space is the embedding of all Borel probability measures and $\mathcal{H}_{k}$ is the corresponding RKHS, we obtain that 
\begin{align}
    \Phi(\operatorname{attn}({\mathcal{P}}))&=g(\langle  a,\operatorname{attn}({\mathcal{P}}) \rangle_{\mathcal{H}_{k}}) \notag\\ &=g\left( \left\langle  a,\int  k(x,\cdot) f(x)\, d\mathcal{P}   \right\rangle_{\mathcal{H}_{k}} \right)\notag\\&=g\left( \int  a(x) f(x)\, d\mathcal{P}  \right) \label{eq:int} .
\end{align}
Since the definitions of many statistics (e.g., moments) are closely related with integration w.r.t. probability measures and $\mathcal{H}_{k}$ is dense in $C(\Omega)$, (\ref{eq:int}) is a nice tool to capture the relation between statistics of distributions and respond variables.

Beyond statistics of distributions, we may also consider a multivariate case and have distribution projections of the feature random variables $f(X), X \thicksim \mathcal{P}$ to retain as much information as desired. 
Note that the above case can be extended to ridge functionals with multiple features, i.e., 
$$
 \Phi(x) = g(\langle a_{1},x \rangle_{\mathcal{H}_{k}},\dots,\langle  a_{d'},x \rangle_{\mathcal{H}_{k}}) $$
 with $\left\| a_{j} \right\|_{\mathcal{H}_{k}}=1$ for all $1 \leq j \leq d'$ and some continuous function $g$ defined on $\mathbb{R}^{d'}$. Similarly, if the partial derivatives of $g$ of order $\lfloor d'/2 \rfloor+2$ are continuous in $\mathbb{R}^{d'}$, then $g$ is a Barron functional on $[-{r},{ r}]^{d'}$, which implies that $\Phi$ is also a Barron functional.
This form of feature embedding with the attention operator is much more flexible than the functions with polynomial features considered in \citep{shi2023learning,yu2023deep}. We don't need to design a specific network structure for feature functions, but can still learn feature functions from a dense subset of $C(\Omega)$. Moreover, the above ridge functional form is just one case of the class of Barron functionals. 
\end{example}

\subsection{Approximation rate of Barron functional}
We establish an approximation theory for a class of functionals $\Phi _{k,f}$ defined on $(P(\Omega), \gamma _{k})$ by exploiting the proposed Transformer encoder. The functional $\Phi _{k,f}$ has the composition form of a Barron functional and the attention operator:
\begin{align} \label{eq:phi}
   \Phi _{k,f}(\mathcal{P}) := \Phi(\operatorname{attn}(\mathcal{P})) =\Phi\left( \int_{\Omega} k(x,\cdot)f(x) \, d\mathcal{P}  \right)
\end{align}
with $\Phi \in \Gamma _{{r,C}}(\mathcal{H}_{k})$ and $r=C_{f}\left\| k \right\|_{\infty}$. 
\vspace{1em}

In \citep{barron1993universal}, we have the following approximation lemma for Barron functionals defined on a Hilbert space $\mathcal{H}$. A sigmoidal function $\phi$ on $\mathbb{R}$ means a bounded measurable function satisfying $\lim_{t \to \infty} \phi(t)=1$ and $\lim_{t \to -\infty}\phi(t)=0$.
\begin{lemma}

For ${r},C>0$, $\Phi \in \Gamma _{{r},C}(\mathcal{H}_{k})$, $\beta>0, n \in \mathbb{N}$, sigmoidal function $\phi$ on $\mathbb{R}$, probability measure $\nu$ on $B_{{r}}$, there is a function $\Psi_{n}(g)=\sum_{p=1}^{n}c_{p}\phi( \langle  a_{p}, g \rangle_{\mathcal{H}_{k}} +b_{p})+\Phi(0)$ with $\left\| a_{p} \right\|_{\mathcal{H}_{k}}\leq \frac{\beta}{{r}}, \left| b_{p} \right|\leq \beta$ and $\|c\|_{1} \leq 2{r}C$, such that 
$$
\int_{B_{{r}}} ({\Phi}(x)-\Psi_{n}(x))^{2} \,\mu(dg) \leq {(2{r}C)^{2}}\left( \frac{1}{n^{1/2}} +\eta _{\beta} \right)^{2}$$ where $\eta_{\beta}=\inf_{0<\epsilon \leq \frac{1}{2}} \{ 2 \epsilon + \sup _{|z|\geq \epsilon} \left| \phi(\beta z)-1_{\{ z >0 \}} \right| \}$.

\end{lemma}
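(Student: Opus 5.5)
Lemma 1 is Barron's approximation theorem (Barron, 1993) with $\mathbb{R}^d$ replaced by the Hilbert space $\mathcal{H}_k$. I will follow his argument, using only $\langle\cdot,\cdot\rangle_{\mathcal{H}_k}$ and the norm $\|\cdot\|_{\mathcal{H}_k}$. Note that $\mu(dg)$ in the display should be read as the given probability measure $\nu$ on $B_r$.

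The plan has three steps.

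\textbf{Step 1: rewrite $\Phi$ as a mixture of ridge functions.} Take real parts of the Fourier representation; this is legitimate because $\Phi$ is real-valued. Subtracting $\Phi(0)$ gives, for $g\in B_r$,
\[
\Phi(g)-\Phi(0)=\int_{\mathcal{H}_k}\bigl(\cos(\langle\omega,g\rangle_{\mathcal{H}_k}+\theta(\omega))-\cos\theta(\omega)\bigr)F(d\omega).
\]
Set $C_\Phi=\int\|\omega\|_{\mathcal{H}_k}F(d\omega)\le C$. Write the integrand as
\[
\frac{rC_\Phi}{r\|\omega\|_{\mathcal{H}_k}}\bigl(\cos(\cdot)-\cos\theta\bigr).
\]
This exhibits $\Phi-\Phi(0)$ as an average, with respect to the probability measure $\Lambda(d\omega)=\|\omega\|F(d\omega)/C_\Phi$, of functions
\[
\gamma_\omega(g)=\frac{rC_\Phi}{r\|\omega\|}\bigl(\cos(\|\omega\| z+\theta)-\cos\theta\bigr),\qquad z=\langle \omega/\|\omega\|,g\rangle\in[-r,r].
\]
Each $\gamma_\omega$ is a ridge function in the unit direction $\omega/\|\omega\|$. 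The univariate profile $h(z)=\frac{C_\Phi}{\|\omega\|}(\cos(\|\omega\|z+\theta)-\cos\theta)$ has $|h'|\le C_\Phi$ and $h(0)=0$. Integrating the derivative and splitting at $0$ yields
\[
h(z)=\int_0^r \mathbf{1}_{\{z-t>0\}}\,h'(t)\,dt-\int_{-r}^0 \mathbf{1}_{\{-z+t>0\}}\,h'(t)\,dt
\]
on $[-r,r]$. So $\Phi-\Phi(0)$ lies in the closed convex hull, in $L^2(\nu)$, of step ridge functions $\pm 2rC\,\mathbf{1}_{\{\langle u,g\rangle - t>0\}}$ with $\|u\|=1$ and $|t|\le r$. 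The factor $2$ comes from the total variation of $h$ on $[-r,r]$ being at most $2rC_\Phi$.

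\textbf{Step 2: Maurey's lemma.} In the Hilbert space $L^2(\nu)$, every element of the closed convex hull of a set $G$ with $\sup_{\gamma\in G}\|\gamma\|_{L^2(\nu)}\le b$ can be approximated by a convex combination of $n$ elements of $G$ within $b/\sqrt n$. This is the standard probabilistic argument: sample $n$ i.i.d.\ elements and compute the variance. With $b=2rC$ this gives an $n$-term sum of step ridge functions with $\ell^1$ coefficient norm at most $2rC$ and $L^2(\nu)$ error at most $2rC/\sqrt n$.

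\textbf{Step 3: replace steps by the sigmoid.} Rescale: $\mathbf{1}_{\{\langle u,g\rangle -t>0\}}=\mathbf{1}_{\{z'>0\}}$ with $z'=(\langle u,g\rangle-t)/r$. Replace it by $\phi(\beta z')=\phi(\langle a,g\rangle+b)$, where $a=\beta u/r$ and $b=-\beta t/r$. Then $\|a\|\le\beta/r$ and $|b|\le\beta$, as required.

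This replacement is the main obstacle. A uniform bound fails near the discontinuity, so I follow Barron's device. Before applying Step 2, for each fixed $\epsilon$ first modify the step functions: either replace them by a continuous ramp on $|z'|<\epsilon$, or shift the thresholds, so that the $L^2(\nu)$ error coming from the region $|z'|<\epsilon$ costs at most $2\epsilon$ (times the $\ell^1$ norm of the coefficients). Off that region, the sigmoid error is at most $\sup_{|z|\ge\epsilon}|\phi(\beta z)-\mathbf{1}_{\{z>0\}}|$.

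Summing the errors via Minkowski's inequality and taking the infimum over $\epsilon\in(0,1/2]$ gives the total error $2rC(n^{-1/2}+\eta_\beta)$. Squaring yields the claimed bound.
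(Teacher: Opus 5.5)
The paper does not prove Lemma 1; it quotes it from Barron (1993). Your Steps 1 and 2 correctly transfer that argument to $\mathcal{H}_k$, including reading $\mu(dg)$ as $\nu(dg)$.

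\textbf{The gap is in Step 3.} You identify this step as the crux but do not resolve it. No modification of an \emph{individual} step ridge function controls the error, whether a ramp on $|z'|<\epsilon$ or a shifted threshold. The measure $\nu$ is arbitrary and may put almost all its mass on the slab $\{g:|\langle u,g\rangle-t|<r\epsilon\}$ of that particular ridge function. A shifted threshold also changes the function on another slab of uncontrolled mass. So your claim that the window costs at most $2\epsilon$ times the $\ell^1$ norm is unsupported as written.

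The missing idea is to do the replacement inside the integral representation of Step 1, before any sampling, and to use that the thresholds have bounded density there. Set
\[
\tilde h_\omega(z)=\int_0^r \phi\bigl(\beta(z-t)/r\bigr)h_\omega'(t)\,dt-\int_{-r}^0 \phi\bigl(\beta(t-z)/r\bigr)h_\omega'(t)\,dt .
\]
Fix $z\in[-r,r]$ and split the $t$-integral of the pointwise difference at $|t-z|=r\epsilon$. Since $|h_\omega'|\le C_\Phi\le C$:
\begin{itemize}
\item the window has length at most $2r\epsilon$ and contributes at most $2r\epsilon C\sup_u|\phi(u)-\mathbf{1}_{\{u>0\}}|$;
\item the rest contributes at most $2rC\sup_{|w|\ge\epsilon}|\phi(\beta w)-\mathbf{1}_{\{w>0\}}|$.
\end{itemize}
Integrating against $\Lambda$ bounds $|\Phi(g)-\Phi(0)-\int\tilde h_\omega(\langle \omega/\|\omega\|,g\rangle)\,\Lambda(d\omega)|$ for \emph{every} $g\in B_r$, hence in $L^2(\nu)$ for every $\nu$. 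The window is small because of the averaging over $t$, not because of any property of a single ridge function. Only then apply Maurey, to the class $\{\pm 2rC\,\phi(\langle a,g\rangle+b):\|a\|\le\beta/r,\ |b|\le\beta\}$. This approximant does not depend on $\epsilon$, so you can take the infimum over $\epsilon$ at the end.

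\textbf{Second issue: your constants need $|\phi|\le 1$.} After the fix, the Maurey term $2rC/\sqrt n$ concerns sigmoid elements rather than step elements. Their $L^2(\nu)$ norms can be as large as $2rC\sup|\phi|$. The window term is at most $2rC\cdot 2\epsilon$ only if $|\phi(u)-\mathbf{1}_{\{u>0\}}|\le 2$. The paper's definition of sigmoidal requires only boundedness, so you must either add the hypothesis $|\phi|\le1$ or accept different constants. The hypothesis holds for $\phi(x)=\sigma(x+\frac12)-\sigma(x-\frac12)$, the only case used in Theorem 2.
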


However, the approximation form $\Psi _{n}$ cannot be directly applied with popular network structures, because it involves functions $\{a_{p} \}  \subset  \mathcal{H}_{k}$ as parameters. One idea is to utilize $\sum_{q} a_{p,q}k(t_{q},\cdot)$ with $a_{p,q} \in \mathbb{R}$ and $t_{q} \in \Omega$ to approximate each $a_{p}$, and then, by kernel trick, the inner product $\langle  a_{p} ,g \rangle_{\mathcal{H}_{k}}$ can be approximated by a linear combination the function values $\{ g(t_{q}) \}$ at the sampled points, which exactly matches the form of our Transformer encoder. This idea was studied recently in \citep{zhou2024approximation}. Then by using the Transformer encoder with a Gaussian kernel $k=\exp (- {\|x-y\|^{2}}/\alpha^{2})$ ($\alpha>0$), we have the following result on $L^{2}_{\rho _{\mathcal{U}}}$ approximation rates of the functional $\Phi _{k,f}$ with the same Gaussian kernel, where we denote $\rho _{\mathcal{U}}$ as the marginal distribution $\rho$ on $\mathcal{U}=P(\Omega)$ and $(L^{2}_{\rho_{\mathcal{U}}}, \| \cdot \|_{\rho})$ as the space of square integrable functions with respect to $\rho _{\mathcal{U}}$.

\begin{theorem}
Let $k$ be a Gaussian kernel $k(x,y)= \exp \{-\|x-y\|^2/\alpha^2\}$ with $\alpha >0$ and $\Omega = [-1,1]^d$. For every functional $\Phi _{k,f}$ defined by (\ref{eq:phi}) for any $r,C>0, n \in \mathbb{N}$, there exists a Transformer encoder $h_{n}$ in the hypothesis $\mathcal{H}_{R,n}$ such that 
$$
 \left\| \Phi _{k,f} -h_{n}\right\|_{\rho} \leq \frac{(4C_{f}+r)C}{n^{\frac{1}{2}}}.  $$
The hypothesis space $\mathcal{H}_{R,n}$ is a class of functions in Definition 3 of type $(2n, s(n))$ with $s(n):=\lceil C_{k,d}(\log n)^{d} \rceil$ such that 
$$
 \mathcal{H}_{R,n} = \left\{  H_{n,s(n)}: \|c\|_{1} \leq Rn^{\frac{1}{2}}, |A_{p,q}| \leq R(\log n)^{-\frac{d}{2}}n^{R\log n}, \left\| b \right\|_{\infty} \leq R , \ b_{0}= \Phi(0) \right\} $$
where $A=(A_{p,q})$, $C_{k,d}$ is a constant depending on $d$ and kernel $k$, and $R$ depends only on $r,C,d$ and kernel $k$. The total number of free parameters of Transformer encoder is $O(n(\log n)^{d})$.
\end{theorem}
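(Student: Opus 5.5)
The plan has three steps. First, Lemma 1 approximates $\Phi$ on $B_r$ by a shallow network with ReLU-built sigmoidal units. Second, each weight function $a_p\in\mathcal{H}_k$ is replaced by a finite kernel expansion on a grid $\mathbf{T}$, so that $\langle a_p,g\rangle_{\mathcal{H}_k}$ becomes a linear combination of point values $[g]_{\mathbf{T}}$. Third, the resulting errors are transferred to $L^2_{\rho_{\mathcal{U}}}$ through the pushforward of $\rho_{\mathcal{U}}$ under $\operatorname{attn}$.

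\textbf{Step 1 (Barron approximation).} Apply Lemma 1 with $\nu$ the pushforward of $\rho_{\mathcal{U}}$ under $\operatorname{attn}$. This is a probability measure on $B_r$ because $\mathcal{G}_{k,f}\subset B_r$. Choose the sigmoidal function
$$\phi(t)=\sigma(t+\tfrac12)-\sigma(t-\tfrac12),$$
a difference of two ReLUs; this is why the encoder has $2n$ hidden units. For this $\phi$ one has $\phi(\beta z)=1_{\{z>0\}}$ whenever $|z|\ge 1/(2\beta)$, so $\eta_\beta\le 1/\beta$. Taking $\beta=n^{1/2}$ gives
$$\|\Phi\circ\operatorname{attn}-\Psi_n\circ\operatorname{attn}\|_\rho\le 2rC\bigl(n^{-1/2}+n^{-1/2}\bigr)=4rC\,n^{-1/2}.$$
If the constant must match $(4C_f+r)C$ exactly, I will instead allocate $\beta$ and the remaining error budget between Steps 1 and 2 accordingly. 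The resulting bounds are $\|a_p\|_{\mathcal{H}_k}\le \beta/r$, $|b_p|\le\beta$, and $\|c\|_1\le 2rC$. I will rescale so that the biases stay bounded by $R$, moving the factor $n^{1/2}$ into $c$ and $A$; this is why the class allows $\|c\|_1\le Rn^{1/2}$.

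\textbf{Step 2 (discretizing $a_p$).} This is the main obstacle. For each $p$ I need coefficients $a_{p,q}$ and a common set $\mathbf{T}=\{t_q\}_{q=1}^{s(n)}\subset[-1,1]^d$ with $s(n)\approx C_{k,d}(\log n)^d$ such that
$$\Bigl\|a_p-\sum_q a_{p,q}k(t_q,\cdot)\Bigr\|_{\mathcal{H}_k}\le \frac{\varepsilon}{r},\qquad \varepsilon\approx n^{-1/2}.$$
Then, for $g\in B_r$, the kernel trick gives
$$\Bigl|\langle a_p,g\rangle_{\mathcal{H}_k}-\sum_q a_{p,q}\,g(t_q)\Bigr|\le \varepsilon.$$
I will try the approach of \citep{zhou2024approximation}. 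The Gaussian RKHS on a cube has exponentially decaying approximation error by kernel interpolation on a grid, or equivalently by truncating the Taylor/Hermite expansion of $e^{2x\cdot y/\alpha^2}$. Degree $N\sim\log n$ per coordinate gives error $\exp(-cN\log N)$, which is at most $n^{-1}$ times the $\mathcal{H}_k$ norm. This uses $(\log n)^d$ grid points, with coefficient sizes bounded by the inverse of a Gram or Vandermonde-type matrix, of order $n^{R\log n}$. After the rescaling from Step 1 these coefficients become the entries of $A$. The factor $(\log n)^{-d/2}$ comes from normalizing over the $s(n)$ points.

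\textbf{Step 3 (assembly and parameter count).} Since $\phi=\sigma(\cdot+\tfrac12)-\sigma(\cdot-\tfrac12)$ is $1$-Lipschitz, perturbing each unit's argument by at most $\varepsilon$ changes $\Psi_n$ by at most $\|c\|_1\varepsilon\le 2rC\varepsilon$ uniformly on $B_r$. The total error is therefore bounded by the Step 1 error plus $2rC\varepsilon$, with constants arranged to give the stated bound. Writing everything out yields the form $c^T\sigma(A[\operatorname{attn}(\mathcal{P})]_{\mathbf{T}}+b)+\Phi(0)$, with $A\in\mathbb{R}^{2n\times s(n)}$, $b_0=\Phi(0)$, and $O(n(\log n)^d)$ free parameters, as claimed.
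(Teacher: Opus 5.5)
Your architecture (Lemma 1 applied to the pushforward $\nu$ with $\phi=\sigma(\cdot+\tfrac12)-\sigma(\cdot-\tfrac12)$ and $\beta=n^{1/2}$, then discretizing the $a_p$ on a grid of about $(\log n)^d$ points, then using ReLU homogeneity to move $n^{1/2}$ out of the biases) is the paper's. However, Step 2 as you formulate it cannot be carried out.

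You require $\|a_p-\sum_q a_{p,q}k(t_q,\cdot)\|_{\mathcal{H}_k}\le\varepsilon/r$ so that the error is small for every $g\in B_r$. Since $\sup_{g\in B_r}|\langle a_p-\tilde a_p,g\rangle_{\mathcal{H}_k}|=r\|a_p-\tilde a_p\|_{\mathcal{H}_k}$, uniform control on $B_r$ is equivalent to this $\mathcal{H}_k$-norm approximation, and that admits no uniform rate. The span $V_{\mathbf T}$ of $\{k(t_q,\cdot)\}_{q=1}^{s(n)}$ is finite-dimensional, so any $a_p\perp V_{\mathbf T}$ has distance $\|a_p\|_{\mathcal{H}_k}$ from it. Kernel interpolation on $\mathbf T$ is exactly the $\mathcal{H}_k$-orthogonal projection onto $V_{\mathbf T}$, so it gives no decay at all in this norm. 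Choosing $\mathbf T$ adaptively does not help either. Nothing in Lemma 1 prevents the $n$ frequencies $a_p$ from being mutually orthogonal (e.g. when $F$ is spread over many orthogonal directions). In that case $\sum_p\|P_{V}a_p\|_{\mathcal{H}_k}^2\le(\beta/r)^2 s(n)$ for any $s(n)$-dimensional $V$, and $s(n)\ll n$ forces most $a_p$ to stay far from $V$. The super-exponential rate you quote for Gaussian interpolation or Taylor truncation is a sup-norm (power-function) estimate on $\Omega$, not an $\mathcal{H}_k$-norm estimate.

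The missing idea is that you only need control on the support of $\nu$, i.e. on $\mathcal{G}_{k,f}=\operatorname{attn}(P(\Omega))$, not on all of $B_r$. There the reproducing property gives
\[
\langle a_p-a_p^{n_2},\operatorname{attn}(\mathcal{P})\rangle_{\mathcal{H}_k}=\int_\Omega \bigl(a_p(x)-a_p^{n_2}(x)\bigr)f(x)\,d\mathcal{P}(x),
\]
which is at most $C_f\sup_{x\in\Omega}|a_p(x)-a_p^{n_2}(x)|$. This is what the paper does:
\begin{itemize}
\item It takes $a_p^{n_2}$ to be the kernel interpolant of $a_p$ on the uniform mesh and uses the sup-norm bound $|a_p(x)-a_p^{n_2}(x)|\le\|a_p\|_{\mathcal{H}_k}\exp(-c_kn_2^{1/d}/(2\sqrt d))$.
\item It chooses $n_2=\lceil C_{k,d}(\log n)^d\rceil$, which makes the exponential factor $n^{-1}$.
\item It reads off $A_{p,q}=(K_{\mathbf T}^{-1}[a_p]_{\mathbf T})_q$, bounded by $\sqrt{n_2}\,\|K_{\mathbf T}^{-1}\|_2\|a_p\|_{\mathcal{H}_k}$, together with the Gaussian Gram-inverse estimate $\|K_{\mathbf T}^{-1}\|_2\le C_1(\log n)^{-d}n^{C_2\log n}$.
\end{itemize}
That product is also where $(\log n)^{-d/2}$ comes from: $\sqrt{n_2}\sim(\log n)^{d/2}$ times $(\log n)^{-d}$, not a normalization over the $s(n)$ points. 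With this replacement your Step 3 goes through verbatim, with the perturbation of $\Psi_n$ controlled uniformly on $\mathcal{G}_{k,f}$ instead of on $B_r$.
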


The above theorem gives rates of approximating a class of functionals by a Transformer encoder, with the complexity bound on the total number of free parameters and the parameter bounds on connection matrices and bias vectors. These will be useful later to derive an estimation of covering numbers and generalization bounds in distribution regression.

\subsection{Distribution regression with Transformers}
This section conducts generalization analysis of the empirical risk minimization (ERM) algorithm for distribution regression with Transformer encoders. 
Now we propose the ERM algorithm for two-stage distribution regression. Take $\mathcal{Z}=\mathcal{U}\times \mathcal{Y}$, where $\mathcal{U}=P(\Omega)$ is the input space of all Borel probability measures on $\Omega=[-1,1]^d$ and $\mathcal{Y}=[-M,M]$ is the output space with $M>0$. The regression function $\varphi _{\rho}$ on $\mathcal{U}$ is defined as
$$
 \varphi _{\rho}(\mu) = \int  _{\mathcal{Y}} y \, d \rho(y|\mu ) 
$$
where $\rho(\cdot|\mu )$ is the conditional distribution at $\mu$ induced by $\rho$, and it minimizes the mean squared error for $\varphi:\mathcal{U}\to \mathcal{Y}$,
$$
\mathcal{E}(\varphi) = \int  _{\mathcal{Z}} (\varphi(\mu) - y)^{2} \, d\rho.  $$
For establishing a learning theory for the ERM algorithm, it is crucial: whether $\mathcal{H}_{R,n}$ is well defined as a compact hypothesis space. To answer the question, we denote $C(P(\Omega))$ with norm $\|\varphi\|_{\infty}:= \sup_{\mu \in P(\Omega)}|\varphi(\mu )|$, to be the Banach space of continuous functions on $(P(\Omega), \gamma _{k})$. Then we have the following theorem:

\begin{theorem} \label{thm:cover}
    The hypothesis $\mathcal{H}_{R,n}$ of Transformer encoders is a compact subset of $C(P(\Omega))$.
\end{theorem}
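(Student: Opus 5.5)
We will realize $\mathcal{H}_{R,n}$ as the continuous image of a compact parameter set, and compactness then follows from the fact that continuous images of compact sets are compact.

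\textbf{Parameter set.} For fixed $n$, a Transformer encoder in $\mathcal{H}_{R,n}$ is determined by the tuple $\theta=(c,A,b,\mathbf{T})$, since $b_0=\Phi(0)$ is fixed. Here $c\in\mathbb{R}^{2n}$ and $A\in\mathbb{R}^{2n\times s(n)}$. (Definition 3 says type $(2n,s(n))$ while the display writes $H_{n,s(n)}$; we simply fix the width.) Also $b\in\mathbb{R}^{2n}$, and $\mathbf{T}=(t_1,\dots,t_{s(n)})\in\Omega^{s(n)}$. The constraints $\|c\|_1\le Rn^{1/2}$, $|A_{p,q}|\le R(\log n)^{-d/2}n^{R\log n}$ and $\|b\|_\infty\le R$ are closed and bounded in Euclidean space, and $\Omega^{s(n)}=[-1,1]^{d\,s(n)}$ is compact. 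So the parameter set $\Theta$ is compact.

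\emph{Distinctness of the query points.} Definition 3 requires distinct points in $\mathbf{T}$. Imposing this would make $\Theta$ non-closed, so we include coincident points in $\Theta$. The corresponding encoders are still genuine members of the class: merging coincident columns of $A$ gives an encoder with fewer distinct points, which can be padded with extra points carrying zero weights. Alternatively, we read the definition as allowing repeated points. Either way, the image of $\Theta$ is exactly $\mathcal{H}_{R,n}$.

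\textbf{Continuity of the map $\theta\mapsto H_\theta$ into $C(P(\Omega))$.} First, each $H_\theta$ is continuous on $(P(\Omega),\gamma_k)$. By the reproducing property,
$$|\operatorname{attn}(\mathcal{P})(t)-\operatorname{attn}(\mathcal{Q})(t)|\le \|k\|_\infty\|\operatorname{attn}(\mathcal{P})-\operatorname{attn}(\mathcal{Q})\|_{\mathcal{H}_k},$$
and $\operatorname{attn}$ is continuous by Theorem 1. Composing with the affine map and the ReLU preserves continuity.

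Next we need a uniform bound. Let $\theta=(c,A,b,\mathbf{T})$ and $\theta'=(c',A',b',\mathbf{T}')$. The ReLU is 1-Lipschitz, and $|\operatorname{attn}(\mathcal{P})(t)|\le r\|k\|_\infty$ for all $\mathcal{P}$ and $t$. Hence
$$\sup_{\mathcal{P}}|H_\theta(\mathcal{P})-H_{\theta'}(\mathcal{P})|\le \|c-c'\|_1 K_1+\|c'\|_1\Big(\|A-A'\|\,r\|k\|_\infty\sqrt{s(n)}+\|A'\|\sup_{\mathcal{P},q}|\operatorname{attn}(\mathcal{P})(t_q)-\operatorname{attn}(\mathcal{P})(t'_q)|+\|b-b'\|_\infty\Big),$$
where $K_1$ bounds $\|\sigma(A[\operatorname{attn}(\mathcal{P})]_{\mathbf{T}}+b)\|_\infty$ uniformly over $\Theta$ and $\mathcal{P}$.

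\textbf{Main obstacle: equicontinuity in the query points.} The key step is controlling the last term uniformly over $\mathcal{P}$. Write $g=\operatorname{attn}(\mathcal{P})$. Then
$$|g(t)-g(t')|=|\langle g,k(t,\cdot)-k(t',\cdot)\rangle_{\mathcal{H}_k}|\le r\,\|k(t,\cdot)-k(t',\cdot)\|_{\mathcal{H}_k}=r\sqrt{k(t,t)-2k(t,t')+k(t',t')}.$$
This tends to $0$ as $t'\to t$ by continuity of $k$, uniformly because $\Omega$ is compact. For the Gaussian kernel the bound is explicitly $r\sqrt{2-2e^{-\|t-t'\|^2/\alpha^2}}\le r\sqrt2\,\|t-t'\|/\alpha$. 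The estimate does not depend on $\mathcal{P}$ because of the uniform bound $\|g\|_{\mathcal{H}_k}\le r$.

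Combining these, $\theta\mapsto H_\theta$ is continuous (indeed locally Lipschitz) from the compact set $\Theta$ into $C(P(\Omega))$. Therefore $\mathcal{H}_{R,n}$ is compact.
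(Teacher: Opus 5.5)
Your proof is correct, and it takes a different route from the paper.

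\textbf{The paper's route.} The paper does not parametrize. It takes a sequence $\{h^{(j)}\}\subset\mathcal{H}_{R,n}$ and applies the Arzel\`a--Ascoli lemma from the appendix. Equi-boundedness comes from the parameter bounds. Equi-continuity comes from the estimate $|h^{(j)}(\mathcal{P})-h^{(j)}(\mathcal{Q})|\le\|c^{(j)}\|_1\|k\|_\infty\max_p\sum_q|A^{(j)}_{p,q}|\,\|\operatorname{attn}(\mathcal{P})-\operatorname{attn}(\mathcal{Q})\|_{\mathcal{H}_k}$ together with Theorem 1. There $\mathbf{T}_{s(n)}$ is a fixed set of points, and the covering-number lemma likewise counts only $c,A,b$. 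As written, that argument gives only relative compactness. Getting uniform convergence on all of $P(\Omega)$ tacitly uses compactness of $(P(\Omega),\gamma_k)$, which holds by Prokhorov and the appendix lemma that $\gamma_k$ metrizes the weak topology. One would also need to show that the limit function again lies in $\mathcal{H}_{R,n}$, and the paper does not do this.

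\textbf{What your route buys.} You get closedness for free, because the class is the continuous image of a compact parameter set. You also need no compactness of $P(\Omega)$, since your Lipschitz bound in $(c,A,b)$ is uniform in $\mathcal{P}$. This is the same estimate that drives the paper's covering-number lemma. Your kernel bound $|g(t)-g(t')|\le r\|k(t,\cdot)-k(t',\cdot)\|_{\mathcal{H}_k}$ extends the result to trainable query sets. In the paper's fixed-$\mathbf{T}$ setting that step is unnecessary, and $\Theta$ is just a box in $(c,A,b)$.

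\textbf{One aside does not hold.} Suppose $\mathbf{T}$ is free but required to consist of distinct points. Merging two coincident columns then produces entries $A_{p,q}+A_{p,q'}$ of size up to $2R(\log n)^{-d/2}n^{R\log n}$. These violate the entrywise constraint on $A$, and padding with zero-weight points does not repair this. So ``either way, the image of $\Theta$ is exactly $\mathcal{H}_{R,n}$'' is not justified under that reading. Nothing else depends on it: your argument is sound when repeated points are allowed, and in the paper's fixed-$\mathbf{T}$ setting.
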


With Theorem \ref{thm:cover}, the covering number $\mathcal{N}(\mathcal{H}_{R,n}, \epsilon, \|\cdot\|_{\infty})$ of $\mathcal{H}_{R,n}$ as a subset of $C(P(\Omega))$ makes sense, where $\mathcal{N}(\mathcal{H}_{R,n}, \epsilon, \|\cdot\|_{\infty})$ denotes the minimum number of balls with radius $\epsilon>0$ whose union covers $\mathcal{H}_{R,n}$ in the space $C(P(\Omega))$. The estimation of the covering number plays a vital role in deriving a generalization bound for distribution regression. The related details will be presented in Section 5.

Recall that for the second stage dataset $\hat{D}=\{ \big(\{ x_{i,j} \}_{j=1}^{m_{2,i}}, y_{i}  \big) \}_{i=1}^{m_{1}}$ in two-stage distribution regression, the empirical target functional from the ERM algorithm with the hypothesis space $\mathcal{H}_{R,n}$ is the functional defined as
$$
\varphi _{\hat{D},R,n}= \arg \min _{\varphi \in \mathcal{H}_{R,n}} \mathcal{E}_{\hat{D}}(\varphi)$$ with $$\mathcal{E}_{\hat{D}}(\varphi):= \frac{1}{m_{1}}\sum_{i=1}^{m_{1}} \big( \varphi(\hat{\mu }_{i}^{m_{2,i}})-y_{i}\big)^{2},$$
where the existence of the minimizer $\varphi _{\hat{D},R,n}$ is guaranteed by the compactness of $\mathcal{H}_{R,n}$. 

We now define the truncation operator $\pi _{M}$ on the space $C(P(\Omega))$ as
\begin{equation*}
 \pi _{M}(\varphi)(\mu) = 
\begin{cases}M, & \text { if } \varphi(\mu)>M, \\ -M, & \text { if } \varphi(\mu)<-M, \\ \varphi(\mu), & \text { if }-M \leq \varphi(\mu) \leq M.\end{cases}
\end{equation*}

Since the regression function $\varphi _{\rho}$ is bounded by $M$, the truncated empirical target functional 
$$
 \pi _{M}\varphi _{\hat{D},R,n} $$
 is considered as the final estimator.

\begin{remark}
    The distribution regression framework considered here involves a two-stage sampling process, which makes our Transformer encoder (\ref{eq:tran}) compatible with practical applications, particularly when dealing with sequential inputs like sentences in NLP. From the two-stage sampling process, $m_{2,i}$ controls the length of the sequential input $i$, which can be understood as the number of tokens in the input sequence $i$ in the case of NLP. The two-stage sampling process allows us to study the generalization capabilities of the model under the practical data structure, while also taking into account the approximation ability to abstract functionals. 
\end{remark}

To derive the excess generalization error $\mathcal{E}(\pi _{M}\varphi _{\hat{D},R,n})-\mathcal{E}(\varphi)$, we introduce the empirical error of the first-stage sample 
$$
 \mathcal{E}_{D}(\varphi):= \frac{1}{m_{1}} \sum^{m_{1}}_{i=1} (\varphi(\mu _{i})-y_{i})^{2} $$ and then we can obtain a decomposition of the excess generalization error in the following lemma which can be easily seen from the fact that $\mathcal{E}_{\hat{D}}(\pi _{M}\varphi _{\hat{D},R,n}) \leq \mathcal{E}_{\hat{D}}(h)$.

\begin{lemma}
    For any $h \in \mathcal{H}_{R,n}$ and $\varphi _{\hat{D},R,n}$ defined in (\ref{eq:var}), we have
 $$
 \begin{aligned}
& \mathcal{E}\left(\pi_M \varphi_{\hat{D}, R, n}\right)-\mathcal{E}\left(\varphi_\rho\right) \leq \mathcal{E}\left(\pi_M \varphi_{\hat{D}, R, n}\right)-\mathcal{E}_D\left(\pi_M \varphi_{\hat{D}, R, n}\right)+\mathcal{E}_D\left(\pi_M \varphi_{\hat{D}, R, n}\right) \\
& -\mathcal{E}_{\hat{D}}\left(\pi_M \varphi_{\hat{D}, R, n}\right)+\mathcal{E}_{\hat{D}}(h)-\mathcal{E}_D(h)+\mathcal{E}_D(h)-\mathcal{E}(h)+\mathcal{E}(h)-\mathcal{E}\left(\varphi_\rho\right)
\end{aligned}  $$
which can be bounded by the summation
$$\mathcal{I}_{1}(D, \mathcal{H}_{R,n})+\mathcal{I}_{2}(D,\mathcal{H}_{R,n})+\left| \mathcal{I}_{3}(\hat{D},\mathcal{H}_{R,n}) \right|+\left| \mathcal{I}_{4} (\hat{D},\mathcal{H}_{R,n})\right|+R(\mathcal{H}_{R,n})$$ in which
$$
\begin{aligned}
& \mathcal{I}_1(D, \mathcal{H}_{R,n})=\left\{\mathcal{E}\left(\pi_M \varphi_{\hat{D}, R, n}\right)-\mathcal{E}\left(\varphi_\rho\right)\right\}-\left\{\mathcal{E}_D\left(\pi_M \varphi_{\hat{D}, R, n}\right)-\mathcal{E}_D\left(\varphi_\rho\right)\right\} \\
& \mathcal{I}_2(D, \mathcal{H}_{R,n})=\left\{\mathcal{E}_D(h)-\mathcal{E}_D\left(\varphi_\rho\right)\right\}-\left\{\mathcal{E}(h)-\mathcal{E}\left(\varphi_\rho\right)\right\} \\
& \mathcal{I}_3(\hat{D}, \mathcal{H}_{R,n})=\mathcal{E}_D\left(\pi_M \varphi_{\hat{D}, R, n}\right)-\mathcal{E}_{\hat{D}}\left(\pi_M \varphi_{\hat{D}, R, n}\right) \\
& \mathcal{I}_4(\hat{D}, \mathcal{H}_{R,n})=\mathcal{E}_{\hat{D}}(h)-\mathcal{E}_D(h), \quad R(\mathcal{H}_{R,n})=\mathcal{E}(h)-\mathcal{E}\left(\varphi_\rho\right) .
\end{aligned} $$

\end{lemma}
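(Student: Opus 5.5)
The inequality follows from a telescoping identity plus one comparison step. I would argue in three moves.

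\textbf{Step 1 (identity).} Insert the intermediate quantities
\[
\mathcal{E}_D(\pi_M\varphi_{\hat D,R,n}),\quad \mathcal{E}_{\hat D}(\pi_M\varphi_{\hat D,R,n}),\quad \mathcal{E}_{\hat D}(h),\quad \mathcal{E}_D(h),\quad \mathcal{E}(h),
\]
each added and subtracted once. This writes $\mathcal{E}(\pi_M\varphi_{\hat D,R,n})-\mathcal{E}(\varphi_\rho)$ as the displayed chain of differences, except that the chain uses $\mathcal{E}_{\hat D}(h)$ in place of $\mathcal{E}_{\hat D}(\pi_M\varphi_{\hat D,R,n})$. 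That replacement accounts for the first ``$\le$''.

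\textbf{Step 2 (the comparison).} We need
\[
\mathcal{E}_{\hat D}(\pi_M\varphi_{\hat D,R,n})\le \mathcal{E}_{\hat D}(h) \quad \text{for every } h\in\mathcal{H}_{R,n}.
\]
\begin{itemize}
\item Since $|y_i|\le M$, truncation can only reduce each squared residual: $(\pi_M\varphi(\mu)-y)^2\le(\varphi(\mu)-y)^2$. This holds because the residual is unchanged when $|\varphi(\mu)|\le M$, and otherwise moving $\varphi(\mu)$ toward $[-M,M]$ brings it closer to $y$.
\item Hence $\mathcal{E}_{\hat D}(\pi_M\varphi_{\hat D,R,n})\le\mathcal{E}_{\hat D}(\varphi_{\hat D,R,n})$.
\item By the definition of $\varphi_{\hat D,R,n}$ as the ERM minimizer, whose existence is guaranteed by Theorem~\ref{thm:cover}, we have $\mathcal{E}_{\hat D}(\varphi_{\hat D,R,n})\le\mathcal{E}_{\hat D}(h)$.
\end{itemize}
Chaining these gives the required inequality. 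This pointwise truncation fact is the only real content of the lemma, and it is where $\mathcal{Y}=[-M,M]$ is used.

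\textbf{Step 3 (regrouping).} Add and subtract $\mathcal{E}_D(\varphi_\rho)$ twice. The terms then group as follows:
\begin{itemize}
\item $\mathcal{E}(\pi_M\varphi_{\hat D,R,n})-\mathcal{E}_D(\pi_M\varphi_{\hat D,R,n})$ together with $-\mathcal{E}(\varphi_\rho)+\mathcal{E}_D(\varphi_\rho)$ forms $\mathcal{I}_1$.
\item $\mathcal{E}_D(h)-\mathcal{E}(h)$ together with $-\mathcal{E}_D(\varphi_\rho)+\mathcal{E}(\varphi_\rho)$ forms $\mathcal{I}_2$.
\item The term $\mathcal{E}(\varphi_\rho)$ left over from the $\mathcal{I}_2$ grouping combines with $\mathcal{E}(h)$ to give $R(\mathcal{H}_{R,n})=\mathcal{E}(h)-\mathcal{E}(\varphi_\rho)$.
\item The two cross-stage differences are exactly $\mathcal{I}_3$ and $\mathcal{I}_4$.
\end{itemize}
These $\mathcal{I}_3$ and $\mathcal{I}_4$ may have either sign, so we bound them by their absolute values. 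This yields the stated summation.

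I do not expect a genuine obstacle. The main care needed is bookkeeping: checking that every added term cancels exactly, so that no $\mathcal{E}_D(\varphi_\rho)$ terms are left over.
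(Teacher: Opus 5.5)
Your proof is correct and follows the paper's argument: the paper justifies the lemma only by the telescoping chain together with $\mathcal{E}_{\hat D}(\pi_M\varphi_{\hat D,R,n})\le\mathcal{E}_{\hat D}(h)$, and you additionally prove that inequality (truncation cannot increase residuals since $|y_i|\le M$, then ERM minimality over $\mathcal{H}_{R,n}\ni h$). One bookkeeping fix in Step 3: you need to add and subtract $\mathcal{E}_D(\varphi_\rho)$ once and $\mathcal{E}(\varphi_\rho)$ once, not $\mathcal{E}_D(\varphi_\rho)$ twice; the $\varphi_\rho$-terms of $\mathcal{I}_1$ and $\mathcal{I}_2$ then cancel, while the chain's original $-\mathcal{E}(\varphi_\rho)$ goes into $R(\mathcal{H}_{R,n})$, since the $+\mathcal{E}(\varphi_\rho)$ in $\mathcal{I}_2$ cannot supply that negative term.
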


Based on the two-stage error decomposition, the two-stage oracle inequality for distribution regression in the hypothesis space $\mathcal{H}_{R,n}$ of our proposed Transformer encoder is established in the following theorem to be proved in Section 5.

\begin{theorem}
Consider the distribution regression framework with the first stage sample size $m_{1} \in \mathbb{N}$, and the second stage sample size $\min \{ m_{2,i}: 1 \leq i \leq m_{1} \}=m_{2}$. Then for $n \geq  3$, any $h \in \mathcal{H}_{R,n}$ and $\epsilon >0$, we have 
$$
 \begin{aligned}
& \operatorname{Prob}\left\{\left\|\pi_M \varphi_{\hat{D}, R, n}-\varphi_\rho\right\|_\rho^2>2\left\|h-\varphi_\rho\right\|_\rho^2+8 \epsilon\right\} \\
\leq & \mathcal{N}\left(\mathcal{H}_{R, n}, \frac{\epsilon}{16 M},\|\cdot\|_{\infty}\right) \exp \left\{-\frac{3 m_{1} \epsilon}{2048 M^2}\right\} \\
+ & \exp \left\{-\frac{m_{1} \epsilon^2}{2\left(3 M+\|h\|_{\infty}\right)^2\left(\left\|h-\varphi_\rho\right\|_\rho^2+\frac{2}{3} \epsilon\right)}\right\}\\
+ &4m_{1}s(n) \exp \left\{  - \frac{m_{2} \epsilon^{2}}{128\max \{ \left\| h \right\|_{\infty}^{2},M^{2}  \}C_{f}^{2}C_{5}^{2}n^{4R\log n}(\log n)^{d}}  \right\}
\end{aligned}  $$
where $C_{5}:=2C_{k,d}R^{2}$ is a constant depending on $d$, $R$ and kernel $k$.
\end{theorem}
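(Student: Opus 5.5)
We start from the decomposition in Lemma 2. Since $\mathcal{E}(\varphi)-\mathcal{E}(\varphi_\rho)=\|\varphi-\varphi_\rho\|_\rho^2$, the event in the statement is contained in the union of three events, one for each exponential term:
\begin{itemize}
\item the first-stage sample error $\mathcal{I}_1$ is large relative to the excess risk of the estimator;
\item the first-stage error $\mathcal{I}_2$ for the fixed $h$ exceeds $\|h-\varphi_\rho\|_\rho^2+\epsilon$;
\item the second-stage errors $|\mathcal{I}_3|$ or $|\mathcal{I}_4|$ exceed a multiple of $\epsilon$.
\end{itemize}
Concretely, we aim to show that outside these three bad events
\[
\tfrac12\|\pi_M\varphi_{\hat D,R,n}-\varphi_\rho\|_\rho^2\le \|h-\varphi_\rho\|_\rho^2+4\epsilon .
\]
This rearranges to the claimed $2\|h-\varphi_\rho\|_\rho^2+8\epsilon$, and a union bound then finishes the proof.

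\textbf{Term $\mathcal{I}_1$.} The estimator depends on the data, so we use the standard ratio-type uniform concentration inequality over the truncated class $\pi_M\mathcal{H}_{R,n}$ (as in Cucker--Zhou). For functions bounded by $M$, one has
\[
\operatorname{Prob}\Bigl\{\sup_{\varphi}\frac{\{\mathcal{E}(\pi_M\varphi)-\mathcal{E}(\varphi_\rho)\}-\{\mathcal{E}_D(\pi_M\varphi)-\mathcal{E}_D(\varphi_\rho)\}}{\sqrt{\mathcal{E}(\pi_M\varphi)-\mathcal{E}(\varphi_\rho)+\epsilon}}>\sqrt{\epsilon}/4\Bigr\}
\]
bounded by $\mathcal{N}(\mathcal{H}_{R,n},\epsilon/(16M),\|\cdot\|_\infty)\exp\{-3m_1\epsilon/(2048M^2)\}$. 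Two facts make this work:
\begin{itemize}
\item truncation $\pi_M$ is $1$-Lipschitz in sup norm, so covers of $\mathcal{H}_{R,n}$ transfer to the truncated class;
\item the variance of $(\pi_M\varphi(\mu)-y)^2-(\varphi_\rho(\mu)-y)^2$ is at most $16M^2$ times its mean.
\end{itemize}
Compactness from Theorem 3 makes the covering number finite. On the complement of this event, $\mathcal{I}_1\le \tfrac12(\mathcal{E}(\pi_M\varphi_{\hat D,R,n})-\mathcal{E}(\varphi_\rho))+\epsilon$, using $ab\le \tfrac{a^2+b^2}{2}$.

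\textbf{Term $\mathcal{I}_2$.} Here $h$ is fixed, so we apply the one-sided Bernstein inequality to
\[
\xi=(h(\mu)-y)^2-(\varphi_\rho(\mu)-y)^2 .
\]
It satisfies $|\xi|\le(3M+\|h\|_\infty)^2$ up to the precise constant. Its variance is at most $(3M+\|h\|_\infty)^2\|h-\varphi_\rho\|_\rho^2$, because $\xi=(h-\varphi_\rho)(h+\varphi_\rho-2y)$. Bernstein then gives $\mathcal{I}_2\le\epsilon$ except with probability equal to the second exponential.

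\textbf{Terms $\mathcal{I}_3,\mathcal{I}_4$ (the main obstacle).} These compare $\varphi(\mu_i)$ with $\varphi(\hat\mu_i^{m_{2,i}})$ for every $\varphi$ in the class, including the data-dependent estimator. Write
\[
\bigl|(\varphi(\mu_i)-y_i)^2-(\varphi(\hat\mu_i)-y_i)^2\bigr|\le 2\max\{\|\varphi\|_\infty,M\}\cdot 2\,|\varphi(\mu_i)-\varphi(\hat\mu_i)| .
\]
For the truncated estimator the max is $M$, and for $h$ it is $\max\{\|h\|_\infty,M\}$. Then bound $|\varphi(\mu_i)-\varphi(\hat\mu_i)|$ uniformly over $\mathcal{H}_{R,n}$: by ReLU $1$-Lipschitzness and $\pi_M$ contraction, it is at most
\[
\|c\|_1\max_p\sum_q|A_{p,q}|\,\bigl|\operatorname{attn}(\mu_i)(t_q)-\operatorname{attn}(\hat\mu_i)(t_q)\bigr| .
\]
We have $\|c\|_1\le Rn^{1/2}$, and $\sum_q|A_{p,q}|\le s(n)R(\log n)^{-d/2}n^{R\log n}$. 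Together these give roughly $C_5 n^{2R\log n}(\log n)^{d/2}$, using $n\ge3$ to absorb powers of $n$ and $\log n$.

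The key point is that this reduces to the fixed query points $t_q$. For each $i$ and $q$, $\operatorname{attn}(\hat\mu_i)(t_q)=\frac1{m_{2,i}}\sum_j k(x_{i,j},t_q)f(x_{i,j})$ is an average of i.i.d.\ variables bounded by $C_f$ with mean $\operatorname{attn}(\mu_i)(t_q)$. Hoeffding, applied conditionally on $\mu_i$, gives a two-sided tail $2\exp\{-m_2\tau^2/(2C_f^2)\}$. A union bound over $i\le m_1$, $q\le s(n)$, and the two terms $\mathcal{I}_3,\mathcal{I}_4$ yields the factor $4m_1s(n)$. We choose $\tau=\epsilon/(8\max\{\|h\|_\infty,M\}C_5n^{2R\log n}(\log n)^{d/2})$, so that $|\mathcal{I}_3|,|\mathcal{I}_4|\le\epsilon$. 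Squaring $\tau$ produces the stated exponent with $128$, $n^{4R\log n}$ and $(\log n)^d$.

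The delicate part is bookkeeping the constants so that they match exactly. Uniformity over the class requires no further covering here: the bound depends only on finitely many point evaluations at the fixed $\mathbf{T}$, and this is precisely why the attention discretization is useful.
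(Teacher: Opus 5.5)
Your proposal is correct and follows the paper's proof: the Lemma 2 decomposition, the cited ratio-type covering-number inequality for $\mathcal{I}_1$, Bernstein for $\mathcal{I}_2$, and, for $\mathcal{I}_3,\mathcal{I}_4$, Hoeffding at the fixed query points $\mathbf{T}_{s(n)}$ with a union bound over $i$ and $q$ plus the class-uniform Lipschitz bound $C_5 n^{2R\log n}(\log n)^{d/2}$ (your sharper Hoeffding constant $2C_f^2$, combined with the factor $8$ in $\tau$, gives the same $128$ the paper gets from $8C_f^2$ and the factor $4M$). One small slip: with covering radius $\epsilon/(16M)$ and exponent $3m_1\epsilon/(2048M^2)$, the Cucker--Zhou ratio inequality has threshold $\sqrt{\epsilon}$ (i.e.\ $\alpha=1/4$), not $\sqrt{\epsilon}/4$; this is harmless, because $\sqrt{\epsilon}\sqrt{L+\epsilon}\le L/2+\epsilon$ with $L=\mathcal{E}(\pi_M\varphi_{\hat D,R,n})-\mathcal{E}(\varphi_\rho)$ still gives the bound you need on $\mathcal{I}_1$.
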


Based on the the oracle inequality for distribution regression, the excess generalization error can be bounded in the following theorem to be proved in Section 5, where we assume that the regression function $\varphi _{\rho}$ belongs to the functional class defined in (\ref{eq:phi}).

\begin{theorem}
Suppose that the regression function $\varphi _{\rho}$ has the form (\ref{eq:phi}) with $\Phi(0)=0$. If the total number $N$ of free parameters of Transformer encoders and the second stage sample size $m_{2}$ are chosen by
$$
 N=\left \lfloor\mathcal{A}_{7}m_{1}^{\frac{1}{2}}\left( \log (\mathcal{A}_{3}m_{1}^{\frac{1}{2}}) \right)^{d}\right\rfloor  \textrm{ and }m_{2} = \left\lceil{ \mathcal{A}_{5} \left( \mathcal{A}_{3}m_{1}^{ \frac{1}{2} } \right)^{8R\log\left( \mathcal{A}_{3}m_{1}^{\frac{1}{2}} \right)}}\right\rceil,$$
 then for the truncated estimator produced by the distribution regression framework with Transformer encoders, 
 $$
 \mathbb{E}\{ \mathcal{E}(\pi _{M}\varphi_{\hat{D},R,n})-\mathcal{E}(\varphi _{\rho}) \} \leq \mathcal{A}_{6} m_{1}^{-\frac{1}{2}}\left( \log\left( \mathcal{A}_{3}m_{1}^{\frac{1}{2}} \right) \right)^{d+2} $$
where $\mathcal{A}_{3},\mathcal{A}_{4},\mathcal{A}_{5},\mathcal{A}_{6},\mathcal{A}_{7}$ are constants depending only on $C_f,C,M,d$ and $\alpha$.
\end{theorem}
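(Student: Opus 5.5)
We derive the expected excess-risk bound by integrating the tail bound of the oracle inequality (the preceding theorem) over $\epsilon$. Three ingredients go in: the approximation rate of Theorem 2, a covering-number estimate for $\mathcal{H}_{R,n}$, and the choices of $n$ and $m_2$ that balance the terms.

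\textbf{Step 1 (choose $h$).} Since $\mathcal{E}(\varphi)-\mathcal{E}(\varphi_\rho)=\|\varphi-\varphi_\rho\|_\rho^2$, we take $h=h_n\in\mathcal{H}_{R,n}$ from Theorem 2 with $\Phi(0)=0$. This gives $\|h-\varphi_\rho\|_\rho^2\le (4C_f+r)^2C^2/n$. We also need $\|h\|_\infty$. Using $\|c\|_1\le Rn^{1/2}$, $\|b\|_\infty\le R$, and the entrywise bound on $A$, we get $\|h\|_\infty\lesssim n^{1/2}(s(n)R(\log n)^{-d/2}n^{R\log n}\, C_f+R)$, which is polynomial times $n^{R\log n}$. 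This growth is absorbed in the third term through the choice of $m_2$. In the second term, where it appears squared over $m_1\epsilon^2$, we handle it by restricting to $\epsilon$ larger than a threshold. Alternatively, we can note that it only enters through $\|h\|_\infty$, and we will keep track of polylog factors.

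\textbf{Step 2 (covering number).} We bound $\log\mathcal{N}(\mathcal{H}_{R,n},\eta,\|\cdot\|_\infty)$ by a Lipschitz-parameterization argument. For fixed $\mathbf{T}$, the map from the parameters $(c,A,b)$ to $H_{n,s(n)}(\mu)$ is Lipschitz uniformly in $\mu$, because $|[\operatorname{attn}(\mu)]_{\mathbf T}|\le C_f$ entrywise (the Gaussian kernel satisfies $k\le1$) and ReLU is 1-Lipschitz. The Lipschitz constant is at most $\mathrm{poly}(n)\,n^{R\log n}$. Covering the parameter box, of dimension $N=O(n(\log n)^d)$, with a grid then gives
\[
\log\mathcal{N}\le C' n(\log n)^{d}\big(\log n\cdot R\log n+\log(1/\eta)\big).
\]
The $n^{R\log n}$ factor only contributes a $(\log n)^2$ inside the log. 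If $\mathbf{T}$ is also free, we use continuity of $t\mapsto k(x,t)$, which is Lipschitz with constant $\lesssim1/\alpha$, and cover $\Omega^{s(n)}$ as well; this adds a similar term. The upshot is $\log\mathcal{N}(\mathcal{H}_{R,n},\epsilon/(16M))\lesssim n(\log n)^{d+2}+n(\log n)^d\log(1/\epsilon)$.

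\textbf{Step 3 (integrate the tails).} We set $n=\lceil\mathcal{A}_3 m_1^{1/2}\rceil$, so that $N$ matches the stated form, and write
\[
\mathbb{E}\|\pi_M\varphi_{\hat D}-\varphi_\rho\|^2\le 2\|h-\varphi_\rho\|^2+8\epsilon_0+8\int_{\epsilon_0}^{\infty}\mathrm{Prob}\{\cdots>\cdots+8\epsilon\}\,d\epsilon .
\]
Each term is then handled separately:
\begin{itemize}
\item The first tail term is negligible for $\epsilon_0\asymp \frac{M^2}{m_1}n(\log n)^{d+2}\asymp m_1^{-1/2}(\log m_1)^{d+2}$.
\item For the second term, once $\epsilon\ge\epsilon_0\gtrsim\|h-\varphi_\rho\|^2\asymp 1/n$, the exponent is $\gtrsim m_1\epsilon/(3M+\|h\|_\infty)^2$. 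If $\|h\|_\infty$ is too large for this, we shrink $R$, or else use $\|h\|_\infty\lesssim\mathrm{polylog}$ via a tighter bound on $h_n$ from the construction; this is a delicate point. Its integral is then $O(m_1^{-1})$ up to those factors.
\item The third term: $m_2$ is chosen as $\mathcal{A}_5 n^{8R\log n}$, which dominates $\|h\|_\infty^2 n^{4R\log n}(\log n)^d/\epsilon_0^2$. The exponent is then $\gtrsim\log(m_1 s(n))$ times a large constant, so $4m_1s(n)e^{-\cdots}\lesssim m_1^{-1/2}$ after integrating, using $\int_{\epsilon_0}e^{-a\epsilon^2}d\epsilon\le e^{-a\epsilon_0^2}/(2a\epsilon_0)$.
\end{itemize}
Summing these gives $\mathcal{A}_6m_1^{-1/2}(\log(\mathcal{A}_3m_1^{1/2}))^{d+2}$.

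\textbf{Main obstacle.} The main obstacle is the interplay of $\|h\|_\infty$ with the second and third terms: its naive bound grows like $n^{R\log n}$. I would first try to show from the construction in Theorem 2 that $|\langle a_p,g\rangle|$-approximants stay bounded, so that $\|h_n\|_\infty\le \|c\|_1\sup|\phi|+|\Phi(0)|\lesssim rC$, using $\|c\|_1\le 2rC$ from Lemma 1 rather than the looser constraint $\|c\|_1\le Rn^{1/2}$. This would make the second term clean, and the exponent $8R\log$ in $m_2$ would then exactly compensate the squared factor $n^{4R\log n}$ in the third term.
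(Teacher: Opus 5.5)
Your proposal is correct and follows essentially the same route as the paper: it plugs the approximation rate of Theorem 2 and the covering bound (the paper's Lemma 3, which your Lipschitz-grid argument reproduces) into the oracle inequality, takes $n \asymp m_1^{1/2}$, restricts to $\epsilon \gtrsim n^{-1}(\log n)^{d+2}$, lets the $n^{8R\log n}$ size of $m_2$ absorb the second-stage term, and integrates the tail. The point you flag as the main obstacle is settled in the paper exactly as in your last paragraph: $\|h\|_\infty \le 2rC = 2C_fC$, because $0\le\phi\le 1$, the Lemma 1 coefficients satisfy $\|c\|_1\le 2rC$ (the homogeneity rescaling leaves the function unchanged), and $b_0=\Phi(0)=0$. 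No boundedness of the arguments of $\phi$ is needed, and you should drop the ``shrink $R$'' alternative, since $R$ is fixed by Theorem 2.
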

\begin{remark}
In the proof of Theorem 5, we choose $n$ to scale up with $m_1^{\frac{1}{2}}$ for a balance between the approximation error and the estimation error. Recall that $n$ controls the complexity of the FNN layer: a larger $n$ increases the hypothesis complexity of the Transformer class, leading to better approximation and more diverse feature representations. Besides, to achieve a nice generalization performance, the hypothesis complexity of the FNN layer also scales up with the first stage sample size $m_1$. For the case of NLP, $m_1$ measures the diversity of semantics in the datatset $D$. Especially when training an LLM with a dataset size often exceeding hundreds of terabytes of text data, $m_1$ becomes incredibly large. This poses challenges for efficiently scaling up the complexity of the FNN layer to achieve a balance. We will discuss this point in subsection 4.3 below.    
\end{remark}

\section{Conclusion and Discussion}
In this paper, we propose the attention operator for modeling attention mechanisms in Transformers, and apply the two-stage sampling process to understand the training process of Transformers in practical applications. We analyze the expressivity and demonstrate the generalization capacity of the proposed Transformer structures. In this section, we exploit the established framework and theoretical results to develop deep understanding of various tricks and techniques with Transformers in practical applications. 

\subsection{Kernel normalization}
Recall the original self-attention module (\ref{eq:k-attn}) in \citep{vaswani2017attention}
\begin{equation*}
    k_{a t t n}\left(x_i, x_j\right):=\frac{\exp \left(\frac{\left\langle W_q x_i, W_k x_j\right\rangle}{\sqrt{d_{i n}}}\right)}{\sum_{j^{\prime}=1}^n \exp \left(\frac{\left\langle W_q x_i, W_k x_{j^{\prime}}\right\rangle}{\sqrt{d_{i n}}}\right)} .
\end{equation*}

If we consider the normalization factor $\sum^{n}_{j'=1} \exp  \left( \frac{{\langle W_{q}x_{i},W_{k}x_{j'} \rangle}}{\sqrt{  d_{in} }} \right)$ independently (since it depends on the input sequence $Q$), then its probability embedding can be reformulated as 
\begin{equation*}
    \overline{\operatorname{attn}}_k(\mu)= \int_{\Omega} \frac{k(x,\cdot)}{\int_{\Omega} k(y,\cdot) \, d \mu(y)} f(x) \, d\mu(x), \mu \in P(\Omega),
\end{equation*}
where $k$ is a Laplace or Gaussian kernel. It can be further written as
\begin{equation} \label{eq:fk}
    \overline{\operatorname{attn}}_k(\mu)=  \frac{\int_{\Omega} {k(x,\cdot)} f(x) \, d\mu(x)}{\int_{\Omega} k(y,\cdot) \, d \mu(y)}.
\end{equation}
If we allow multiple features represented by different FNNs (further discussed in subsection 4.4 below), then $\overline{\operatorname{attn}}_k$ can be approximated by our attention operator $\operatorname{attn}$ (\ref{eq:attn}) induced by the same $k$ and $f$. 

Let $\Omega = [-1,1]^d$. We define $f'$ as $f'(y)=f_1(y) + f_1(-y)$ for $y \in [-1,1]$ where
\begin{equation*}
    f_1(y) = \frac{1}{2}\left[  \sigma(y+1)-2\sigma(y-1) +\sigma(y-3)   \right].
\end{equation*}
Then it's easily verified that $f'=1$ on $[-1,1]$. For $x \in [-1,1]^d$, take $f_1(x):=\frac{1}{d}\mathbf{1}^Tf'(x)=1$ in $[-1,1]^d$ where $f'$ apply element-wise on $x$ and $\mathbf{1}=(1,\cdots,1)^T \in \mathbb{R}^d$. In other words, the normalization function can be represented by our attention operator with the feature function $f'$ constructed by the above FNN.

 Note that $\overline{\mathrm{attn}}_{k}$ can be written as a composition of $g_1\left(\mathrm{attn}_{k,f}(\mu), g_2(\mathrm{attn}_{k,f'}(\mu))\right)$, where $\mathrm{attn}_{k,f}$ denotes the attention operator induced by the kernel $k$ and feature $f$, $g_1(y_1,y_2):=y_1 y_2$ and $g_2(y_3):=1/y_{3}$, and $k$ is a Laplace or Gaussian kernel, which implies $0<\mathrm{attn}_{k,f}(\mu) \le r$ and $0<c'<\mathrm{attn}_{k,f'}(\mu) \le \|k\|_{\infty}^2$. Then $g_1,g_2$ can both be approximated by fully connected neural networks \citep{yarotsky2017error}. In conclusion, we show that $\overline{\mathrm{attn}}_k$ can be approximated by our attention operator $\mathrm{attn}$ with multiple features.

In our framework, we are mainly concerned with the function space and the properties of the attention operator induced by $k,f$. Therefore, a normalization function is of less importance in our analysis, since it can be separated from the attention operator as in (\ref{eq:fk}). However, in practical applications, a normalization function has two advantages. First, it introduces asymmetric dependency as an inductive bias into the attention mechanism, which means that $k_{\mathrm{attn}}(x_i,x_j)$ may not be equal to $k_{\mathrm{attn}}(x_j,x_i)$. The inductive bias is very useful for modeling certain data structures such as natural languages, because words often have hierarchical and directional relationships. Second, a normalization function makes the sum of kernel weights always equal to $1$, which can be useful for auto-regressive tasks \citep{radford2018improving,devlin2019bert,raffel2020exploring}. By normalizing the weights, the function effectively balances the contribution of each component and enhances the stability and consistency of the model.

\subsection{Discretization subsets T}

Distribution regression is a special case of operator learning \citep{song2023approximation1,song2023approximation2}. To construct a computable Transformer encoder, we apply two techniques: the two-stage sampling process and the kernel discretization w.r.t. $\mathbf{T}$. The elements in $\mathbf{T}$ are often called queries in NLP. In our proof, we choose the set of queries to be the uniform mesh on $\Omega$. Then for any function in $\mathcal{H}_k$, we can apply the kernel trick to replace function parameters by function values at each query. Although it is a universal choice to recover information from any function parameters in $\mathcal{H}_k$, the uniform mesh also introduces the term $(\log n)^d$ that still rules out practical use of the framework in high-dimensional cases. 

However, there are several cases in practice showing that with an appropriately chosen query set of (much) smaller size, Transformer encoders still perform well on various tasks. The most common is self-attention, as mentioned in Remark \ref{re:sa}. The choice of query sets is adaptive to each input in self-attention, and more precisely, a set of second-stage samples (independent of the dimension). It would be interesting to investigate the underlying mechanism of self-attention within our framework. Another example is that the query set $\mathbf{T}$ can be learned from training data using stochastic gradient descent, which is usually applied in prompt tuning \citep{liu2023pre} and Q-former for cross-modal alignment \citep{li2023blip}. The basic idea behind these techniques can be understood with our theoretical framework: When handling specific learning tasks or aiming to compress data for more refined feature representations (e.g., low-dimensional features), we often do not require a high-resolution uniform mesh as a universal choice to retain as much information as possible. Instead, we just need a set of queries, with a significantly smaller size, which performs well for a particular task or certain data structure. These queries can be obtained by optimizing the corresponding loss functions.

\subsection{FNN for learning features}

Note that the attention operator primarily facilitates the efficient compression of probability distributions, whereas the FNN learns the pertinent features of the Barron functional (shown in Theorem 2). This theoretical discovery also aligns with practical engineering experiences in fine-tuning LLMs, such as adapter modules \citep{houlsby2019parameter}. Fine-tuning is often required to adapt the existing models to new tasks or datasets \citep{hetowards}. When applied with pretrained LLMs, engineers always fix the parameters of the original network and add only some FNN modules with a few trainable parameters into the existing model, e.g., adapters \citep{houlsby2019parameter}. In this way, the training for adaptation to new tasks or datasets can be dramatically decreased for pretrained LLMs with billions of parameters, and the well-trained small modules can be directly plugged into other models to transfer features learned for new tasks. In our theory, features for target functional are entirely learned by the FNN layers and the attention operator merely embeds probability measures into function representations. In other words, for different target functionals, all trainable parameters are contained within the FNN layer, while the attention operator retains no trainable parameters. This provides a theoretical foundation for the successful application of adapters.

Another topic arising from our generalization analysis (Theorem 5) of distribution regression with Transformer encoders, is how to efficiently scale up the complexity of the FNN layer. Recall that in order to balance the approximation error and estimation error, we take $n=O(m_1^{\frac{1}{2}})$ where $m_1$ is the number of first-stage samples (i.e., probability measures in $P(\Omega)$). In our interpretation of NLP, $m_1$ quantifies the complexity of semantics within a dataset. When training LLMs with increasingly large corpora of texts, $m_1$ becomes extremely large. Then to achieve better generalization performance, we need to scale up the complexity of the FNN layer dramatically. However, this poses challenges in training process. A popular solution is called Mixture of Experts (MoE) \citep{eigen2013learning,shazeer2016outrageously,jiang2024mixtral}. Roughly speaking, the basic idea is that for each query, we may choose one out of a pool of FNNs. A gating network decides which FNN to activate based on the input query. This enables efficient scaling with Transformers, since we can increase the number of FNNs in the pool and just activate one FNN each time in the training. It would also be interesting to investigate the MoE mechanism further within the distribution regression framework.

\section{Proof of Main Results}

\subsection{Proof of Theorem 1}
\begin{proof}
    Recall
$$
 \operatorname{attn}(\mu) = \int  _{\Omega} k(x, \cdot)f(x) \, d\mu  $$
for $\mu \in (P(\Omega), \gamma _{k'})$ where $\gamma _{k'}$ is a metric on $P(\Omega)$ induced by the universal kernel $k'$. 

For $\mathcal{P},\mathcal{Q} \in (P(\Omega), \gamma _{k'})$, we have 
 
 \begin{align}
&\left\| \operatorname{attn}(\mathcal{P})-\operatorname{attn}(\mathcal{Q}) \right\|_{\mathcal{H}_{k}}^2\notag\\
=&\left\|  \int_{\Omega} k(x,\cdot) f(x)  \, d\mathcal{P} - \int _{\Omega} k(x,\cdot) f(x)\, d\mathcal{Q}   \right\|_{\mathcal{H}_{k}}^2 \notag \\
  =&\int  _{\Omega}  \int_{\Omega} k(x, y) f(x) f(y)\, d\mathcal{P}(x) d\mathcal{P}(y) \ \notag \\
 & \quad + \int_{\Omega} \int _{\Omega}  k(x, y) f(x) f(y) \, d\mathcal{Q}(x) d \mathcal{Q}(y) \notag \\
	& \quad -2 \int  _{\Omega} \int  _{\Omega} k(x,y)f(x)f(y)\, d\mathcal{P}(x)  \, d\mathcal{Q}(y) \notag\\
 =& \left\| \int_{\Omega} f(\cdot)k(x, \cdot) f(x) \, d\mathcal{P} - \int_{\Omega} f(\cdot)k(x, \cdot) f(x) \, d\mathcal{Q} \right\|_{\mathcal{H}_{\tilde{k}}}^2   \notag\\
=&\left\| \int_{\Omega} \tilde{ k}(x, \cdot)  \, d\mathcal{P} -\int_{\Omega} \tilde{ k}(x, \cdot)  \, d\mathcal{Q}\right\|_{\mathcal{H}_{\tilde{k}}} = \left\| \tilde{k}_{\mathbf{P}}(\mathcal{P}) - \tilde{k}_{\mathbf{P}}(\mathcal{Q}) \right\|_{\mathcal{H}_{\tilde{k}}} \label{attn},
\end{align} 
where $\tilde{k}(x,y):=f(x)k(x,y)f(y)$. It's easy to see that $\tilde{k}$ is a Mercer kernel on $\Omega \times \Omega$. For any finite nonzero signed Borel measure $\mathcal{P}$, define $\mathcal{P} _{f}(E) = \int  _{E} f\, d\mathcal{P}$ for any Borel set $E$. Then $\mathcal{P} _{f}$ is also a finite nonzero signed Borel measure, since $f$ is measurable and $0 < c_{f} \leq |f(x)|\leq C_{f}$ for all $x \in \Omega$. It follows by the universality of Mercer kernel $k$ that

\begin{align}
\int  _{\Omega} \int  _{\Omega} \tilde{k}(x,y) \, d\mathcal{P}(x)  \, d\mathcal{P}(y) &= \int  _{\Omega} \int  _{\Omega} k(x,y) f(x)d\mathcal{P}(x) f(y) d\mathcal{P}(y)  \\
&=\int  _{\Omega} \int  _{\Omega} k(x,y) d\mathcal{P} _{f}(x)d\mathcal{P} _{f}(y) >0
\end{align}
which implies that $\tilde{k}$ is an integrally strictly pd kernel. Therefore, $\gamma _{\tilde{k}}$ is also a metric on $P(\Omega)$ and then $\operatorname{attn}$ is an injective mapping defined on $P(\Omega)$.

Any $g \in C(\Omega)$ can be approximated by $\sum_{p=1}^{N}\alpha _{p} \tilde{k}(\cdot,t_{p})$ to an arbitrary accuracy when $N$ is large enough, because 
$$ 
g(x)-\sum_{p=1}^{N}\alpha _{p}f(x)k(x,t_{p})f(t_{p})=f(x)\left( \frac{g(x)}{f(x)} -\sum_{p=1}^{N}\tilde{\alpha}_{p}k(x,t_{p})\right)
$$
with $\alpha _{p}=\frac{\tilde{\alpha}_{p}}{f(t_{p})}$ and $g/f \in C(\Omega)$ can be approximated by $\sum_{p=1}^{N}\tilde{\alpha} _{p}k(x,t_{p})$ to an arbitrary accuracy when $N$ is large enough. Thus $\tilde{k}$ is also universal. By Lemma \ref{lem:1} below, all universal kernels defined on the compact metric space $\Omega$ metrize the same topology on $P(\Omega)$ as , i.e., the weak topology on $P(\Omega)$, which is the weakest topology such that the map $\mu \mapsto \int  _{\Omega} g \, d\mu$ is continuous for all $f \in C(\Omega)$. Then for the universal kernels $k',\tilde{k},$ $(P(\Omega), \gamma _{k'})$ and $(P(\Omega), \gamma _{\tilde{k}})$ share the same topology on $P(\Omega)$. It's also easy to observe that the kernel embedding $\tilde{k}_{\mathbf{P}}$ is an isometry between $(P(\Omega),\gamma _{\tilde{k}})$ and $\big(\tilde{k}_{\mathbf{ P}}(P(\Omega)), \|\cdot\|_{\mathcal{H}_{\tilde{k}}} \big)$, which follows that $\tilde{k}_{\mathbf{p}}$ is a continuous mapping from $(P(\Omega), \gamma _{k'})$ to $(\mathcal{H}_{\tilde{k}}, \|\cdot\|_{\mathcal{H}_{\tilde{k}}})$. Then it can be concluded by ({\ref{attn}}) that $\operatorname{attn}$ is also a continuous mapping from $(P(\Omega), \gamma _{k'})$ to $(\mathcal{H}_{k}, \|\cdot\|_{\mathcal{H}_{k}})$.
\end{proof}

\subsection{Proof of Theorem 2}
\begin{proof}
    First we have the following error decomposition
\begin{align}\label{error}
    	\left\| \Phi _{k,f}-H_{n_{1},n_{2}} \right\|_{\rho} \leq \left\| \Phi _{k,f}- H_{n_{1}} \right\|_{\rho} + \left\| H_{n_{1}} - H_{n_{1},n_{2}} \right\|_{\rho}
\end{align}
where $H_{n_{1}}(\mu)$ has the form of $\sum_{p=1}^{n_{1}}c_{p}\phi(\langle  a_{p}, g \rangle_{\mathcal{H}_{k}}+b_{p})+b_{0}$ with $c_p, b_p, b_0 \in \mathbb{R}$, $a_p \in \mathcal{H}_{k}$ for all $1 \leq p \leq n_1$ and a certain sigmoidal function $\phi$.  
In the following we present upper bounds for the two terms on RHS of (\ref{error}) respectively. 
\vspace{1em}

\textbf{Step 1.}
An upper bound for $\left\| \Phi _{k,f}-H_{n_{1}} \right\|^{2}_{\rho}$ is derived with the help of Lemma 1. We specify the sigmoidal function and the probability measure in our case as follows.
Let $\phi(x)$ be the sigmoidal function $\sigma\left( x+\frac{1}{2} \right) - \sigma\left( x-\frac{1}{2} \right)$. Let $\beta \ge 1$. Then for ReLU neural networks, 
\begin{align}
    \eta _{\beta}=\inf _{0<\epsilon \leq \frac{1}{2}}\left\{  2 \epsilon + \sup _{|z|\geq \epsilon} \left| \sigma\left( \beta z +\frac{1}{2} \right)-\sigma\left( \beta z -\frac{1}{2} \right) - 1_{\{ z >0 \}} \right|  \right\}
\end{align}
with $\left| \sigma\left( \beta z+\frac{1}{2} \right)-\sigma\left( \beta z -\frac{1}{2} \right) -1_{\{ z >0 \}}\right| \leq \frac{1}{2} -\beta\epsilon$ for $\epsilon \le \left| z \right| \le \frac{1}{2\beta}$ and is $0$ for $|z| \ge \frac{1}{2\beta}$. Take $\epsilon=\frac{1}{2\beta}$ then we have $\eta _{\beta}\leq \frac{1}{\beta}$. 

By Theorem 1,  $\operatorname{attn}:(P(\Omega),\gamma _{k}) \to (\mathcal{H}_{k}, \|\cdot\|_{k})$ is continuous. Then Borel probability measure $\rho _{\mathcal{U}}$ on $P(\Omega)$ defines another Borel probability measure $\tilde{\mu}$ on $\mathcal{H}_{k}$ by $\tilde{\mu}(\mathcal{B}):=\rho _{\mathcal{U}}(\operatorname{attn}^{-1}(\mathcal{B}))$ where $\mathcal{B}$ is a Borel set in $\mathcal{\mathcal{H}}_{k}$. Note that $\mathcal{G}_{k,f}$, the image of the attention operator, is contained in the closed ball $B_{r}$. Then we can denote the restriction of $\tilde{\mu}$ on $B_{r}$ by $\nu$.

Recall that $\Phi _{k,f}$ has the form of $\Phi _{k,f}(\mathcal{P})= \Phi\left( \int  _{\Omega} k(x,\cdot)f(x) \, d\mathcal{P} \right)$ with $\Phi \in \Gamma _{r,C}(\mathcal{H}_{k})$. Then by taking $\beta=n_{1}^{\frac{1}{2}}$ and the probability measure $\nu$,  and applying Lemma 1, we get a functional defined on $B_{r}$ by $\widetilde{ H}_{n_{1}}(g)= \sum_{p=1}^{n_{1}}c_{p}\phi\left( \left\langle  a_{p}, g \right\rangle_{\mathcal{H}_{k}}+b_{p} \right)$ with $\left\| a_{p} \right\|_{\mathcal{H}_{k}} \leq \frac{n_{1}^{1/2}}{r}, \left| b_{p} \right|\leq n_{1}^{1/2}$ for all $p$ and $\|c\|_{1}\leq 2rC$ such that 
$$
\int _{\mathcal{H}_k} (\Phi(g)-\widetilde{H}_{n_{1}}(g))^{2} \tilde{\mu} (dg)= \int_{B_{r}} (\Phi(g)-\widetilde{H}_{n_{1}}(g))^{2} \, \nu (dg)  \leq \frac{(4rC)^{2}}{n_{1}},$$
which follows that 
\begin{align}
\int _{P(\Omega)} (\Phi _{k,f}(\mathcal{P})-H_{n_{1}}(\mathcal{P}))^{2} \, d\mu \leq \frac{(4rC)^{2}}{n_{1}}
\end{align}
with $H_{n_{1}}(\mathcal{P})= \widetilde{H}_{n_{1}}\left( \int  _{\Omega}k(x, \cdot) f(x)\, d\mathcal{P} \right)$.

\vspace{1em}

\textbf{Step 2.} We discretize the weight parameters $\{a_p\}$ in the network by kernel trick.

Observe that 
\begin{align}
    H_{n_{1}}(\mathcal{P})=\sum_{p=1}^{n_{1}}c_{p}\phi\left( \left\langle  a_{p}, \int  _{\Omega}k(x, \cdot)f(x) \, d\mathcal{P} \right\rangle_{\mathcal{H}_{k}}+b_{p} \right)
\end{align}
and that 
\begin{align}
    \left\langle  a_{p}, \int _{\Omega} k(x, \cdot) f(x)\, d\mathcal{P}  \right\rangle_{\mathcal{H}_{k}}=\int  _{\Omega} \langle  a_{p},k(x, \cdot) \rangle_{\mathcal{H}_{k}} \,f(x) d\mathcal{P} = \int _{\Omega} a_{p}(x) f(x)\, d\mathcal{P}
\end{align}
with $\|a_{p}\|_{\mathcal{H}_k}\leq \frac{n_{1}^{1/2}}{r}$ for all $1 \leq p \leq k$. 

\vspace{1em}
For $(a'_p)_{p=1}^{n_1} \subset \mathcal{H}_k$, we have from the Lipschitz continuity of $\phi$ that
\begin{align*}
& \left| \sum_{p=1}^{n_{1}}c_{p}\phi\left( \int _{\Omega} a_{p}(x) f(x)\, d\mathcal{P}+b_{p} \right) - \sum_{p=1}^{n_{1}}c_{p}\phi\left( \int _{\Omega} a_{p}'(x) f(x)\, d\mathcal{P}+b_{p} \right)\right|  \\
& \leq \|c\|_{1} \max _{p} \left| \phi\left(\int _{\Omega} a_{p}(x) f(x)\, d\mathcal{P}+b_{p} \right) - \phi\left(\int _{\Omega} a_{p}'(x)f(x) \, d\mathcal{P}+b_{p} \right)\right|  \\ 
 & \leq 2\|c\|_{1} \max _{p} \left| \int  _{\Omega} [a_{p}(x) - a_{p}' (x)]f(x)\, d\mathcal{P}  \right|  \\ 
& \leq 2 \|c\|_{1} C_{f}\max _{p} \sup_{x \in \Omega} \left| a_{p}(x)-a_{p}'(x) \right|. 
\end{align*}

To apply kernel trick and get discrete approximations to network parameters $(a_p)_{p=1}^{n_1}$, we take $n \in \mathbb{N}$ and 
$$
\mathbf{T}=\left\{  (t ^{(1)}, \dots , t ^{(d)}): t ^{(j)} \in \left\{  -1 + \frac{2l}{n}  \right\}_{l=0}^{n} \text{ for }1 \leq j \leq d  \right\}
$$
to be the uniform mesh on $\Omega=[-1,1]^{d}$.

We choose 
\begin{equation} \label{eq:ap}
    a_p^{n_2}(x):=\sum_{q=1} ^{n_2} a_p(t_{q})u_{q}(x)
\end{equation}
 with $n_2 = |\mathbf{T}|$ and $$(u_{q})_{q=1}^{n_{2}}= [(k(x_{i},x_{j}))_{x_{i},x_{j} \in \mathbf{T}}]^{-1}(k(\cdot, x_{i}))_{x_{i}\in \mathbf{T}}$$ to be the so-called nodal functions satisfying 
\begin{align}
   \left| a_{p}(x) - a_{p}^{n_{2}}(x) \right| \leq \left\| a_{p} \right\|_{\mathcal{H}_{k}} \exp\left( -\frac{c_{k}n_{2}^{1/d}}{2\sqrt{ d }} \right), \forall x \in \Omega.
\end{align}
This can be found in \citep{zhou2024approximation} with a constant $c_k$ depending on the kernel $k$.

\vspace{1em}

\textbf{Step 3.}
Let $H_{n_{1},n_{2}}(\mathcal{P})=\sum_{p=1}^{n_{1}}c_{p}\phi\left( \left\langle  a_{p}^{n_{2}}, \int  _{\Omega}k(x, \cdot) f(x)\, d\mathcal{P} \right\rangle_{\mathcal{H}_{k}}+b_{p} \right)$. Then we can obtain that 
$$
 \left\| H_{n_{1}} - H_{n_{1},n_{2}} \right\|^{2}_{\rho} = \int  \left(  H_{n_{1}}(\mathcal{P})-H_{n_{1},n_{2}} (\mathcal{P}) \right)^{2} \, d\rho \leq 16C^{2}C_{f}^{2}n_{1} \exp \left( -\frac{c_{k}n_{2}^{1/d}}{\sqrt{ d }} \right).   $$
Here we have used the bound $\|c\|_1 \le 2rC$ and $\|a_p\|_{\mathcal{H}_k} \leq \frac{\sqrt{n_1}}{r}$.

Combining this error bound with the estimate in \textbf{Step 1} and $r=C_f$ ($\|k\|_{\infty}=1$ for Gaussian kernels) results in
$$
\left\| \Phi _{k,f}-H_{n_{1},n_{2}} \right\|_{\rho} \leq \frac{4C_{f}C}{n_{1}^{1/2}} + 4CC_{f}n_{1}^{1/2}\exp\left( -\frac{c_{k}n_{2}^{1/d}}{2\sqrt{ d }} \right).  $$
Let $n_{2}=\lceil C_{k,d}(\log n_{1})^{d} \rceil$ with $C_{k,d}=\left( \frac{2\sqrt{ d }}{c_{k}} \right)^{d}$. Then we have the RHS of the above bounded as
$$
 \left\|  \Phi _{k,f} - h_{n} \right\|_{\rho} \leq \frac{(8C_{f})C}{n^{1/2}} \textrm{ where } h_{n} := H_{n_{1},n_{2}} \textrm{ with }n_{1} = n \textrm{ and } n_{2}=\lceil C_{k,d}(\log n)^{d} \rceil. $$

Insert the linear expressions of $\{a_{p}^{n_{2}}\}$ in (\ref{eq:ap}) into $H_{n_{1},n_{2}}$ and it can be obtained that 
$$
H_{n_{1},n_{2}}(\mathcal{P}) = \sum^{n_{1}}_{p=1}c_{p}\phi \left( \sum^{n_{2}}_{q=1}A_{p,q}\int  _{\Omega}k(x, t_{q}) f(x)\, d \mathcal{P}  +b_{p}\right)$$
where $\{ A_{p,q} \}$ is determined by the values of $a_{p}$ on $\{ t_{q} \}$ and $K_{T}$ and furthermore, $a_{p,q}$ can be bounded by $\sqrt{ n_{2} }\left\| K_{T}^{-1} \right\|_{2}n_{1}^{1/2}$ with $K_{\mathbf{T}}=(k(x_i,x_j))_{x_i,x_j \in \mathbf{T}}$.
In conclusion, with $O(n(\log n)^{d})$ parameters, we can achieve the error bound $\left\| \Phi _{k,f} - h_{n} \right\|^{2}_{\rho} = O(n^{-1})$ with $\|c\|_{1} \leq 2C_{f}C$, $\left| A_{p,q} \right| \leq \sqrt{ 2C_{k,d}(\log n)^{d} }\left\| K_{T} ^{-1}\right\|_{2}n^{1/2}$ and $\left| b_{p} \right| \leq n^{1/2}$ for all $p,q$.

\smallskip

Let $h_{n}(\mathcal{P}):=c^{T}\sigma \left(  A[\operatorname{attn}(\mathcal{P})]_{\mathbf{T}_{s(n)}} +b\right)$ with $c \in \mathbb{R}^{2n}, A \in \mathbb{R}^{2n \times s(n)}, b \in \mathbb{R}^{2n}$ and $\mathbf{T}_{s(n)}$ the set of $s(n)$ fixed points. Recall that $\phi(x) = \sigma(x + \frac{1}{2}) - \sigma(x - \frac{1}{2})$ . Now we define $$
\mathcal{H}'_{R,n}=\left\{ h_{n}: \|c\|_{1} \leq 4rC, \left| A_{p,q} \right| \leq \sqrt{ 2C_{k,d}(\log n)^{d} }\left\| K_{T} ^{-1}\right\|_{2}n^{1/2} \text{ and } \left| b_{p} \right| \leq 2n^{1/2} \textrm{ for all } p,q\right\}.$$
Since $\sigma$ is homogeneous, 
$$
 \mathcal{H}'_{R,n} = \left\{ h_{n}: \|c\|_{1} \leq 8rCn^{1/2}, \left| A_{p,q} \right| \leq \sqrt{ \frac{1}{2}C_{k,d}(\log n)^{d} }\left\| K_{T} ^{-1}\right\|_{2} \text{ and } \left| b_{p} \right| \leq 1 \textrm{ for all } p,q\right\} $$

By Example 1 in \citep{zhou2003capacity}, for the case of the Gaussian kernel, an upper bound can be derived for $\left\| K_{T}^{-1} \right\|_{2}$ that 
$$
 \left\| K_{T}^{-1} \right\|_{2} \leq C_{1} (\log n)^{-d} n^{C_{2}\log n}   $$
where $C_{1}=(\alpha \sqrt{ \pi })^{-d}C_{k,d}^{-1}$ and $C_{2}=d\pi^{2}\alpha^{2}C_{k,d}^{2/d}$.
Let $R = \max \{ \sqrt{ \frac{1}{2}C_{k,d} }C_{1},C_{2}, 8rC,1\}$. Then take the hypothesis space to be 
$$
  \mathcal{H}_{R,n}=\{ h_{n}: \|c\|_{1} \leq Rn^{1/2}, \left| A_{p,q} \right| \leq R(\log n)^{-d/2}n^{R\log n} \text{ and } \left| b_{p} \right| \leq R \textrm{ for all } p,q\}.  $$
  
  We see the conclusion of Theorem 2.
\end{proof}

\subsection{Proof of Theorem 3}
\begin{proof}
    Since $C(P(\Omega))$ is a metric space, it suffices to prove that the hypothesis space is a sequentially compact subset. Let $\{ h^{(j)} \}$ be a countable collection of functions in the hypothesis space $\mathcal{H}_{R,n}$. By Lemma 5 in the appendix, it suffices to show that the functions $\{ h^{(j)} \}$ are equi-bounded and equi-continuous.
\smallskip

\noindent\textbf{Equi-continuity:} For $\mathcal{P}, \mathcal{Q} \in (P(\Omega), \gamma _{k})$, we have 
\begin{align*}
		 \left| h^{(j)}(\mathcal{P})-h^{(j)}(\mathcal{Q}) \right|  & \leq \|c^{(j)}\|_{1} \max _{p} \left| \sum^{n_{2}}_{q=1}A^{(j)}_{p,q}\int  _{\Omega}k(x, t_{q})f(x) \, d \mathcal{P} - \sum^{n_{2}}_{q=1}A^{(j)}_{p,q}\int  _{\Omega}k(x, t_{q})f(x) \, d \mathcal{Q} \right| \\
		 & \leq \|c^{(j)}\|_{1} \max _{p} \left\{ \left(\sum^{n_{2}}_{q=1}\left| A^{(j)}_{p,q} \right|\right) \right\} \max _{q}\left| \int  _{\Omega} k(x,t_{q}) f(x) \, d (\mathcal{P}-\mathcal{Q})  \right| \\
 & \leq \left\|  c^{(j)} \right\|_{1} \left\|  k \right\|_{\infty}  \max _{p}  \left\{ \left(\sum^{n_{2}}_{q=1}\left| A^{(j)}_{p,q} \right|\right)\right\} \left\| \int  _{\Omega} k(x, \cdot) f(x) \, d (\mathcal{P}-\mathcal{Q})  \right\|_{\mathcal{H}_{k}} 
\end{align*}
By Theorem 1, $\operatorname{attn}$ is a continuous mapping from $(P(\Omega), \gamma _{k})$ to $(\mathcal{H}_{k}, \|\cdot\|_{\mathcal{H}_{k}})$, which implies that $\{h^{(j)}\}$ are equi-continuous.

\smallskip
\noindent\textbf{Equi-boundedness:} For a collection of functions $\{ h^{(j)} \} \subset \mathcal{H}_{R,n}$ with $h^{(j)}: (P(\Omega), \gamma _{k}) \to (\mathbb{R}, |\cdot|)$, it's sufficient to show that $\{ h^{(j)} \}$ is uniformly bounded. For any $n \in \mathbb{N}$, it's easy to obtain that 
\begin{align*}
\sup _{\mathcal{P} \in (P(\Omega), \gamma _{k})} \left| h^{(j)}(\mathcal{P})  \right| &\leq \|c^{(j)}\|_{1} \max_{p} \left[ \left(\sum^{n_{2}}_{q=1}\left| A^{(j)}_{p,q} \right|\right) C_{f} \left\| k \right\|_{\infty}^{2}  + \left| b_{p}^{(j)} \right| \right]\\
&\leq R^{2}n^{\frac{1}{2}} \left(  C_{f}s(n)(\log n)^{-d/2}n^{R\log n}+1 \right)
\end{align*}
Therefore, $\mathcal{H}_{R,n}$ is a compact subset of $C(P(\Omega))$. This completes the proof of Theorem 3.
\end{proof}

Our generalization analysis needs an estimate of the covering numbers of the hypothesis space $\mathcal{H}_{R,n}$, which is given by the following lemma.

\begin{lemma}
    For $n \geq 3$, the covering number $\mathcal{N}(\mathcal{H}_{R,n}, \epsilon, \|\cdot \|_{\infty})$ induced by the Transformer encoders can be bounded as
    \begin{align}
\log \mathcal{N}(\mathcal{H}_{R,n}, {\epsilon},  \| \cdot\|_{\infty}) \leq R_{1}n(\log n)^{d} \log\left( \frac{\tilde{ R}}{{ \epsilon}} \right) + R_{2}n(\log n)^{d+2}
\end{align}  
where $R_{1}:=6C_{k,d}$, $R_{2}:=2(8R+3d)$ and $\tilde{R}:=6R^2(1+2C_{k,d})\max \{ 1, 2C_{f}(C_{k,d}+1)\}$.
\end{lemma}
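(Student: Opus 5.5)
We bound the covering number by discretizing the parameter box: show that the map from parameters to hypotheses is Lipschitz in the sup norm, with a controlled constant, and then cover the box by a uniform grid.

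Recall that $h\in\mathcal{H}_{R,n}$ has the form $h(\mathcal{P})=c^{T}\sigma(A[\operatorname{attn}(\mathcal{P})]_{\mathbf{T}}+b)+\Phi(0)$. Here $c\in\mathbb{R}^{2n}$, $A\in\mathbb{R}^{2n\times s(n)}$ and $b\in\mathbb{R}^{2n}$, and the parameters satisfy
\[
\|c\|_1\le Rn^{1/2},\qquad |A_{p,q}|\le B_A:=R(\log n)^{-d/2}n^{R\log n},\qquad \|b\|_\infty\le R .
\]
The number of free parameters is $N=2n+2n\,s(n)+2n\le 6C_{k,d}\,n(\log n)^d$ for $n\ge3$. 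This uses $s(n)=\lceil C_{k,d}(\log n)^d\rceil\le C_{k,d}(\log n)^d+1$ and $\log 3>1$, and it produces the constant $R_1=6C_{k,d}$.

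\textbf{Lipschitz estimate.} Fix $\mathcal{P}$. Each entry satisfies $|[\operatorname{attn}(\mathcal{P})]_{t_q}|=|\int k(x,t_q)f(x)\,d\mathcal{P}|\le C_f$, since $\|k\|_\infty=1$ for the Gaussian kernel. Hence each hidden pre-activation has modulus at most $s(n)B_AC_f+R$.

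Take two parameter triples $(c,A,b)$ and $(c',A',b')$. Using that $\sigma$ is $1$-Lipschitz and the triangle inequality,
\[
|h(\mathcal{P})-h'(\mathcal{P})|\le \|c-c'\|_1\bigl(s(n)B_AC_f+R\bigr)+\|c'\|_1\max_p\Bigl(C_f\sum_q|A_{p,q}-A'_{p,q}|+|b_p-b'_p|\Bigr).
\]
Suppose the parameters are perturbed by at most $\delta$ in each coordinate. Then the right-hand side is bounded by
\[
\delta\Bigl[2n\bigl(s(n)B_AC_f+R\bigr)+Rn^{1/2}\bigl(C_fs(n)+1\bigr)\Bigr]\le \delta\,L_n ,
\]
where
\[
L_n\le 6R^2(1+2C_{k,d})\max\{1,2C_f(C_{k,d}+1)\}\cdot n^{2}(\log n)^{d/2}\,n^{R\log n}.
\]
The exact grouping of constants only needs to match $\tilde R$ up to absorbing factors; $(\log n)^{d/2}$ and $n^2$ are absorbed into $n^{R\log n}$-type factors below.

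\textbf{Grid cover.} Cover each coordinate interval by a grid of mesh $2\delta$. The number of points per coordinate is at most $\max\{Rn^{1/2},B_A,R\}/\delta+1$. Choose $\delta=\epsilon/L_n$, so that grid points give an $\epsilon$-net in $\|\cdot\|_\infty$; if a net centered in the set is required, use $\epsilon/2$, which is absorbed into the constants. This yields
\[
\log\mathcal{N}\le N\log\Bigl(\frac{2B_{\max}L_n}{\epsilon}+1\Bigr),\qquad B_{\max}:=\max\{Rn^{1/2},B_A,R\}.
\]
Here $B_{\max}L_n\le \tilde R\,n^{c}\,n^{2R\log n}$ with a small absolute $c$, and $(\log n)^{\pm d/2}$ is controlled by powers of $n$. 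Split the logarithm as $\log(\tilde R/\epsilon)+(2R\log n+c)\log n$, using $\epsilon\le\tilde R$; for $\epsilon$ larger than $\tilde R$ the set is covered trivially up to constants. Multiplying by $N\le 6C_{k,d}n(\log n)^d$ gives the first term $R_1n(\log n)^d\log(\tilde R/\epsilon)$.

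The second term is $N(2R\log n+c)\log n$. Bounding it by $2(8R+3d)\,n(\log n)^{d+2}$ requires care with the factor $C_{k,d}$. I would absorb it either through a looser parameter count or through the $3d$ slack coming from $(\log n)^{d/2}$ and the $n^{c}$ exponents. This is the main obstacle: the bookkeeping that matches $R_1$, $R_2$ and $\tilde R$ exactly, in particular that $\log(B_{\max}L_n)$ stays $O(R(\log n)^2+d\log n)$ and that $C_{k,d}$ lands in $R_1$ rather than $R_2$. If the stated constants cannot be matched literally, I would prove the bound with constants of the same form, which is all Theorem 4 and Theorem 5 use.
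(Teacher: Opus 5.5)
\textbf{Overall.} Your route is the paper's route. You bound the map from parameters to hypotheses by a sup-norm Lipschitz estimate; your $\delta\,[2n(s(n)B_AC_f+R)+Rn^{1/2}(C_fs(n)+1)]$ is exactly the paper's estimate before simplification. You then count a grid over the parameter box.

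\textbf{The constant $R_2$.} The obstacle you flag is genuine, and the paper does not resolve it either.
\begin{itemize}
\item In the grid count, each of the $2n\,s(n)$ entries of $A$ contributes $\log(B_AL_n/\epsilon)=\log(\tilde R/\epsilon)+2R(\log n)^2+O(\log n)$.
\item So at $\epsilon=\tilde R$ the count is about $4RC_{k,d}\,n(\log n)^{d+2}$. This exceeds $2(8R+3d)\,n(\log n)^{d+2}$ roughly once $C_{k,d}>4+3d/(2R)$.
\item Hence this method cannot produce an $R_2$ independent of $C_{k,d}$.
\end{itemize}
The paper reaches its $R_2$ only by bounding $6dns(n)\log\log n+16Rn(\log n)s(n)\log n$ by $4(8R+3d)\,n(\log n)^{d+2}$. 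That step silently drops the factor $C_{k,d}$ that $s(n)\le 2C_{k,d}(\log n)^d$ brings in. The constants the paper actually ends with are $R_1=12C_{k,d}$, $R_2=4(8R+3d)$ and $\tilde R=3RC_4$, each twice the value in the statement.

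So neither argument proves the literal constants. What both prove is the bound with constants of the same form and $R_2$ of order $(R+d)C_{k,d}$. Your fallback is the right one, since in Theorems 4 and 5 the constant $R_2$ only enters through $\mathcal{A}_1$.

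\textbf{Large $\epsilon$.} Your remark that for $\epsilon>\tilde R$ the class is ``covered trivially'' is false.
\begin{itemize}
\item Take the member with $c=0$, which is the constant $\Phi(0)$.
\item Take a second member with $c=Rn^{1/2}e_1$, first row of $A$ equal to $B_A$ times the (constant) sign of $f$, and $b_1=R$.
\item Evaluated at $\delta_{t_1}$, these differ by at least $Rn^{1/2}(B_Ac_f+R)$. This grows like $n^{R\log n}$, far beyond the $n$-independent $\tilde R$.
\end{itemize}
The lemma should be stated for $0<\epsilon\le\tilde R$. For $\epsilon$ beyond the diameter it is literally false: the left side is $0$ and the right side is negative. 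Larger $\epsilon$ in Theorem 5 can be handled by monotonicity, $\mathcal{N}(\epsilon)\le\mathcal{N}(\tilde R)$.

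\textbf{Parameter count.} The inequality $N=4n+2ns(n)\le 6C_{k,d}\,n(\log n)^d$ needs $C_{k,d}(\log 3)^d\ge 3/2$, not merely $\log 3>1$. The paper makes the same implicit assumption when it writes $s(n)\le 2C_{k,d}(\log n)^d$.
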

\begin{proof}
    For $h \in \mathcal{H}_{R,n}$, denote $\sigma _{h}(\mathcal{P}):=\sigma(A[\int_{\Omega} k(x, \cdot)f(x) \, d\mathcal{P}]_{\mathbf{T}_{s(n)}}+b)$ with the parameters $A$ and $b$ in $h$. Then we have 
\begin{align*}
&\left\|\sigma _{h}\right\|_{\infty} \leq \left\| A\right\|_{\infty} \sup _{\mathcal{P}\in (P(\Omega), \gamma _{k})}\left\| \int_{\Omega} k(x, \cdot) f(x)\, d\mathcal{P}\right\|_{\infty} + \left\| b \right\|_{\infty}  \\ 
&\leq s(n) R(\log n)^{-d/2}n^{R\log n}\left\| k \right\|_{\infty}^{2}C_{f} +R  \\
&\leq 2C_{k,d}\left\| k \right\|_{\infty}^{2}C_{f}R(\log n)^{d/2}n^{R\log n}+R \\
& \leq (2C_{f}C_{k,d}+1)R(\log n)^{d/2}n^{R\log n}.
\end{align*}

Let $\hat{h}$ be another function in $\mathcal{H}_{R,n}$ with parameters $\hat{c},  \hat{A}_{p,q}$ and $\hat{b}$ such that $\|c-\hat{c}\|_{\infty} \leq \epsilon, \left| A_{p,q}-\hat{A}_{p,q} \right| \leq \epsilon \textrm{ and }\left\| b - \hat{b} \right\|_{\infty} \leq \epsilon$. Then we have that 
\begin{align*}
\left\|  \sigma_{h} - \sigma_{\hat{h}}  \right\|_{\infty} &\leq \left\| A - \hat{A} \right\|_{\infty} \sup _{\mathcal{P}\in (P(\Omega), \gamma _{k})}\left\| \int_{\Omega} k(x, \cdot)f(x) \, d\mathcal{P}\right\|_{\infty} + \left\| b-\hat{b} \right\|_{\infty} \\
&\leq (s(n)C_{f} +1)\epsilon
\end{align*}
which results in a bound on the final output, 

\begin{align*}
\left\|  h-\hat{h} \right\|_{\infty}&=\left\| c^{T}\sigma_{h}-\hat{c}^{T}\sigma_{\hat{h}} \right\|_{\infty} \leq  \left\| (c-\hat{c})^{T}\sigma_{h} \right\|_{\infty} + \left\| \hat{c}^{T}(\sigma_{h}-\sigma_{\hat{h}})  \right\|_{\infty}   \\
&\leq 2\epsilon n (2C_{f}C_{k,d} +1)R(\log n)^{d/2}n^{R\log n} + Rn^{1/2}(s(n)C_{f}+1)\epsilon  \\
&\leq 2C_{3}R\ \epsilon\  n(\log n)^{d/2}n^{R\log n} + 2C_{3} R \ \epsilon \ n^{1/2} s(n) \\
& \leq C_{4}(\log n)^{d} n^{2R\log n} \epsilon
\end{align*}  
for ${ n \geq 3}$ where $C_{3}:= \max \{ 2, 4C_{f}(C_{k,d}+1)\}$ and $C_{4}:=2C_{3}R(1+2C_{k,d})$.
Let $\tilde{\epsilon}=C_4(\log n)^d n^{2R\log n}\epsilon$. Then the $\tilde{\epsilon}$-covering number of $\mathcal{H}_{R,n}$ can be bounded

\begin{align*}
&\mathcal{N}(\mathcal{H}_{R,n}, \tilde{\epsilon}, \|\cdot\|_{\infty}) \leq \left\lceil \frac{2Rn^{1/2}}{\epsilon} \right\rceil^{2n} \left\lceil{\frac{2R(\log n)^{-d/2}n^{R\log n}}{\epsilon}}\right\rceil^{2ns(n)} \left\lceil{\frac{2R}{\epsilon}}\right\rceil^{2n} \\
&\leq \left( \frac{\tilde{R}}{\tilde{\epsilon}} \right)^{4n+2ns(n)} (\log n)^{6dns(n)} n^{16Rn(\log n) s(n)} \\
\end{align*}
with $\tilde{R}:=3RC_{4}$. Then by taking logarithms on both sides, we obtain
\begin{align*}
\log \mathcal{N}(\mathcal{H}_{R,n}, \tilde{\epsilon},  \| \cdot\|_{\infty}) &\leq [4n+2ns(n)]\log \left( \frac{\tilde{R}}{\tilde{\epsilon}} \right)+6dns(n)\log(\log n)+16Rn(\log n)s(n)\log n \\
&\leq R_{1}n(\log n)^{d} \log\left( \frac{\tilde{ R}}{\tilde{ \epsilon}} \right) + R_{2}n(\log n)^{d+2}
\end{align*}  
where $R_{1}:=12C_{k,d}$ and $R_{2}:=4(8R+3d)$.
\end{proof}

\subsection{Proof of Theorem 4}
\begin{proof}
    We apply the following concentration inequalities given in \citep{yu2023deep} for $\mathcal{I}_1(D,\mathcal{H}_{R,n})$ and $\mathcal{I}_2(D,\mathcal{H}_{R,n})$:

\begin{align}
&Prob\left\{ \mathcal{I}_1(D,\mathcal{H}_{R,n}))> \frac{1}{2} \left( \mathcal{E}(\pi _{M}\varphi _{\hat{D},R,n})- \mathcal{E}(\varphi_{\rho})  \right) +\epsilon \right\} \notag\\
& \,\,\, \leq \mathcal{N}\left(  \mathcal{H}_{R,n}, \frac{\epsilon}{16M}, \|\cdot\|_{\infty} \right) \exp \left\{  -\frac{3m_{1}\epsilon}{2048M^{2}}  \right\}  
\end{align}
and
\begin{equation}
    Prob \{ \mathcal{I}_2(D,\mathcal{H}_{R,n}))> \epsilon \} \leq \exp \left\{   -  \frac{m_{1}\epsilon^{2}}{2(3M+\left\| h \right\|_{\infty} )^{2}\left( R(\mathcal{H})+\frac{2}{3}\epsilon \right)}  \right\}. 
\end{equation}

For $\mathcal{I}_3(D, \mathcal{H}_{R,n})$, since $\left\| \pi _{M}\varphi_{\hat{D},R,n} \right\|_{\infty} \leq M$ and $|y_{i}| \leq M$, there holds

\begin{align*}
\left| I_{3}(\hat{ D},\mathcal{H}_{R,n}) \right|&=\left| \frac{1}{m_{1}} \sum_{i=1}^{m_{1}} \left(  \pi _{M}\varphi_{\hat{D},R,n}(\hat{\mu}_{i}^{m_{2}})-y_{i} \right)^{2}- \left(  \pi _{M}\varphi_{\hat{D},R,n}(\mu _{i})-y_{i} \right)^{2} \right| \\
&\leq \frac{1}{m_{1}}\sum^{m_{1}}_{i=1} 4M \left| \varphi_{\hat{D},R,n}(\hat{\mu}_{i}^{m_{2}}) - \varphi_{\hat{D},R,n}(\mu _{i}) \right|
\end{align*} 
where $\varphi_{\hat{D},R,n}(\mu )=c_{\varphi}^{T}\left(\sigma\left(A_{\varphi}\left[\int_{\Omega} k(x, \cdot) f(x) \, d \mu\right]_{\mathbf{ T}_{s(n)}}+b_{\varphi}\right)\right)$ and $\mathbf{T}_{s(n)}$ denotes a set of $s(n)$ distinct points in $\Omega$. 

Let $g_{t}(x):= k(x, t) f(x)$ for $t \in \mathbf{T}_{s(n)}$. It can be derived that $\left| g_{t}(x) \right| \leq C_f$ for any $x \in \Omega$. For each $t \in \mathbf{T}_{s(n)}$, we conclude that 
$$
Prob \{ \left| \mathbb{E}(g_{t}(X_{i})) - S_{m_{2}}(g_{t}(X_{i,j})) \right| > \epsilon \} \leq 2 \exp \left\{  - \frac{m_{2} \epsilon^{2}}{8C_f^{2}} \right\},
$$
which follows that 
$$
 Prob \left\{  \sup_{t \in \mathbf{T}_{s(n)}} \left| \mathbb{E}(g_{t}(X_{i})) - S_{m_{2}}(g_{t}(X_{i,j})) \right| > \epsilon   \right\} \leq 2s(n)\exp \left\{  - \frac{m_{2} \epsilon^{2}}{8C_f^{2}}  \right\}. $$
Since 
\begin{align*}
\sup _{\varphi \in \mathcal{H}_{R,n}} \left| \varphi(\mu _{i}) - \varphi(\hat{\mu}_{i}^{m_{2}}) \right| &\leq \sup _{\varphi \in \mathcal{H}_{R,n}} \|c_{\varphi}\|_{1} \left\| A \right\|_{\infty} \sup _{t \in T_{n}}  \left| \mathbb{E}(g_{t}(X_{i})) - S_{m_{2}}(g_{t}(X_{i,j})) \right| \\
& \leq Rn^{1/2} s(n)R(\log n)^{-d/2}n^{R\log n} \epsilon \\
& \leq C_{5}n^{2R\log n} (\log n)^{d/2} \epsilon
\end{align*}
where $C_{5}:=2C_{k,d}R^{2}$, it can be concluded that 
$$
Prob \left\{  \sup_{\varphi \in \mathcal{H}_{R,n}} \left| \varphi(\mu _{i}) - \varphi(\hat{\mu}_{i}^{m_{2}}) \right| > C_{5}n^{2R\log n}(\log n)^{d/2} \epsilon \right\}  \leq 2s(n) \exp \left\{   - \frac{m_{2} \epsilon^{2}}{8C_f^{2}}  \right\} $$
which results in the probability concentration of $\mathcal{I}_3(\hat{D},\mathcal{H}_{R,n})$ as
$$
 Prob \left\{  \left| I_{3} (\hat{D},\mathcal{H}_{R,n})>\epsilon\right| \right\} \leq 2m_{1}s(n) \exp \left\{   - \frac{m_{2} \epsilon^{2}}{128M^{2}C_f^{2}C_{5}^{2}n^{4R\log n}(\log n)^{d}}  \right\} $$

Similarly, it can be obtained that 
$$
 Prob \left\{  \left| I_{4} (\hat{D},\mathcal{H}_{R,n})>\epsilon\right| \right\} \leq 2m_{1}s(n) \exp \left\{   - \frac{m_{2} \epsilon^{2}}{32(M^{2}+\left\| h \right\|_{\infty}^{2}) C_f^{2}C_{5}^{2}n^{4R\log n}(\log n)^{d}}  \right\}  $$
Finally, combine all the probability concentration inequalities. We can derive 
\begin{align*}
& Prob\left\{\left\|\pi_M \varphi_{\hat{D}, R, n}-\varphi_\rho\right\|_\rho^2>2\left\|h-\varphi_\rho\right\|_\rho^2+8 \epsilon\right\} \\
\leq & \mathcal{N}\left(\mathcal{H}_{R, n}, \frac{\epsilon}{16 M},\|\cdot\|_{\infty}\right) \exp \left\{-\frac{3 m_{1} \epsilon}{2048 M^2}\right\} \\
+ & \exp \left\{-\frac{m_{1} \epsilon^2}{2\left(3 M+\|h\|_{\infty}\right)^2\left(\left\|h-\varphi_\rho\right\|_\rho^2+\frac{2}{3} \epsilon\right)}\right\}\\
+ &4m_{1}s(n) \exp \left\{  - \frac{m_{2} \epsilon^{2}}{128\max \{ \left\| h \right\|_{\infty}^{2},M^{2}  \}C_f^{2}C_{5}^{2}n^{4R\log n}(\log n)^{d}}  \right\}.
\end{align*}
This proves Theorem 4.
\end{proof}

\subsection{Proof of Theorem 5}
\begin{proof}
    By the construction of the approximation with sigmoidal functions in Theorem 1, we have $\left\| h \right\|_{\infty} \leq 2C_fC$.

By inserting the estimations for the approximation error and the covering number into Theorem 4, it can be obtained that 

\begin{align*}
&\operatorname{Prob} \left\{ \left\| \pi _{M} \varphi _{\hat{D},R,n} - \varphi _{\rho} \right\|_{\rho}^{2} > 2C_{*}^{2}n^{-1} +8\epsilon \right \} \\
& \leq \exp \left\{  R_{1}n(\log n)^{d} \log\left( \frac{{16M \tilde{R}}}{\epsilon} \right) +R_{2}n(\log n)^{d+2} - \frac{{3m_{1} \epsilon}}{2048M^{2}} \right\} \\
&+\exp \left\{  - \frac{m_{1}\epsilon^{2}}{2(3M+2C_fC)^{2}\left( C_{*}^{2}n^{-1}+\frac{2}{3}\epsilon \right)}  \right\} \\
&+ \exp \left\{  \log(8C_{k,d}m_{1}) + d \log(\log n) - \frac{m_{2}\epsilon^{2}}{128(2C_fC+M)^{2}C_f^{2}C_{5}^{2}n^{4R\log n}(\log n)^{d}}  \right\}.
\end{align*}

If $\epsilon \geq 2C_{*}^{2}n^{-1}(\log n)^{d+2}$, then we have 
\begin{align*}
&\operatorname{Prob} \left\{ \left\| \pi _{M} \varphi _{\hat{D},R,n}- \varphi _{\rho} \right\|^{2}_{\rho} > 9\epsilon \right\} \\
&\leq \exp \left\{  R_{1}n(\log n)^{d} \log \left( \frac{8M\tilde{R}n}{C_{*}^{2}} \right) + R_{2}n(\log n)^{d+2} - \frac{3m_{1}\epsilon}{2048M^{2}}   \right\} \\
&+ \exp \left\{  - \frac{3m_{1}\epsilon}{8(3M+2C_fC)^{2}}   \right\} \\
&+ \exp \left\{  \log(8C_{k,d}m_{1})+ d\log(\log n) - \frac{m_{2} \epsilon^{2}}{128(2C_fC+M)^{2}C_f^{2}C_{5}^{2}n^{4R\log n}(\log n)^{d}}   \right\} \\
& \leq \exp \left\{   \mathcal{A}_{1}n(\log n)^{d+2} - \frac{3m_{1}\epsilon}{2048M^{2}}  \right\} + \exp \left\{  - \frac{3m_{1}\epsilon}{8(3M+2C_fC)^{2}}  \right\} \\
&+ \exp \left\{  \log (8C_{k,d}(\log n)^{d}m_{1}) - \frac{m_{2}\epsilon^{2}}{\mathcal{A}_{2}n^{4R\log n}(\log n)^{d}}  \right\}
\end{align*}
where $\mathcal{A}_{1}:=R_{1}\left( \log \left( \frac{8M\tilde{R}}{C_{*}^{2}} \right) +1\right)+R_{2}$ and $\mathcal{A}_{2}:=128(2C_fC+M)^{2}C_f^{2}C_{5}^{2}$.
If we choose the neural network parameter $n = [\mathcal{A}_{3}m_{1}^{1/2}]$ with $\mathcal{A}_{3}:=(\frac{3C_{*}^{2}}{2048\mathcal{A}_{1}M^{2}})^{1/2}$, then we have that when $1 \leq \log(8C_{k,d}m_{1}) \leq \frac{3\mathcal{A}_{4}^{-1}\mathcal{A}_{3}^{-1}C_{*}^{2}}{4096M^{2}}m_{1}^{1/2}$,
 \begin{align*}
\operatorname{Prob} \left\{  \left\| \pi _{M} \varphi _{\hat{D},R,n} -\varphi _{\rho}\right\|_{\rho}^{2} > 9 \epsilon  \right\} &\leq \exp \left\{   \frac{{3m_{1}\epsilon}}{4096M^{2}} - \frac{3m_{1}\epsilon}{2048M^{2}} \right\} + \exp \left\{   - \frac{{3m_{1} \epsilon}}{8(3M+2C_fC)^{2}}  \right\} \\
&+ \exp \left\{  \log(8C_{k,d}m_{1}) + d\log \log(\mathcal{A}_{3}m_{1}^{1/2}) - \frac{m_{2}\epsilon^{2}}{\mathcal{A}_{2}n^{4R\log n}(\log n)^{d}}   \right\}  \\
& \leq \exp \left\{  - \frac{{3m_{1} \epsilon}}{4096M^{2}}  \right\} + \exp \left\{   - \frac{{3m_{1} \epsilon}}{8(3M+2C_fC)^{2}}  \right\}  \\
&+ \exp \{  \mathcal{A}_{4} \log (8C_{k,d}m_{1}) -  \frac{m_{2}\epsilon^{2}}{\mathcal{A}_{2}n^{4R\log n}(\log n)^{d}} \}
\end{align*}
where $\mathcal{A}_{4}:= 2d (1+ \log(\log \mathcal{A}_{3}))$. Take $m_{2} \geq \left\lceil{ \mathcal{A}_{5} \left( \mathcal{A}_{3}m_{1}^{ \frac{1}{2} } \right)^{8R\log\left( \mathcal{A}_{3}m_{1}^{\frac{1}{2}} \right)}}\right\rceil$ with $\mathcal{A}_{5}:=\frac{3m_{1}}{4096C_{*}^{2}M^{2}}\mathcal{A}_{2}$. Then
\begin{align*}
\operatorname{Prob} \left\{  \left\| \pi _{M} \varphi _{\hat{D},R,n} -\varphi _{\rho}\right\|_{\rho}^{2} > 9 \epsilon  \right\} &\leq \exp \left\{  - \frac{{3m_{1} \epsilon}}{4096M^{2}}  \right\} + \exp \left\{   - \frac{{3m_{1} \epsilon}}{8(3M+2C_fC)^{2}}  \right\}  \\
&+ \exp \left\{  \frac{{3m_{1} \epsilon}}{4096M^{2}} - \frac{{3m_{1} \epsilon}}{2048M^{2}} \right\}
 \\
&\leq 3 \exp \left\{   - \frac{{3m_{1} \epsilon}}{256(4M+2C_fC)^{2}}  \right\}.
\end{align*}
Let $\tau = 9 \epsilon$ and it can be obtained that 
$$\operatorname{Prob} \left\{ \left\| \pi _{M} \varphi _{\hat{D},R,n} - \varphi _{\rho}\right\|_{\rho}^{2} > \tau \right\} \leq 3\exp \left\{  - \frac{m_{1}^{\frac{1}{2}}\tau}{768(4M+2C_fC)^{2}}  \right\}$$ for $\tau \geq 18 C_{*}^{2}n^{-1}(\log n)^{d+2}$.
Then by the formula for the expectation of the non-negative random variable $\xi$, $\mathbb{E} \xi =\int  _{0}^{\infty} P(\xi >\tau) \, d \tau$, we get

\begin{align*}
&\mathbb{E}\{ \mathcal{E}(\pi _{M}\varphi_{\hat{D},R,n})-\mathcal{E}(\varphi _{\rho}) \}=\int_{0} ^{\infty} \operatorname{Prob} \{ \left\| \pi _{M} \varphi _{\hat{D},R,n}-\varphi _{\rho} \right\|_{\rho}^{2} > \tau \}   \, d \tau   \\
& = \left( \int _{0} ^{18C_{*}^{2}n^{-1}(\log n)^{d+2}} + \int _{18C_{*}^{2}n^{-1}(\log n)^{d+2}}   \right) \operatorname{Prob} \{ \left\| \pi _{M} \varphi _{\hat{D},R,n}-\varphi _{\rho} \right\|_{\rho}^{2} > \tau \}  d \tau \\
&\leq  18C_{*}^{2}n^{-1}(\log n)^{d+2} + \int _{0}^{\infty} 3 \exp \left\{  -\frac{m_{1}^{\frac{1}{2}}\tau}{768(4M+2C_fC)^{2}}  \right\} \, d \tau \\
&\leq \mathcal{A}_{6}m_{1}^{-\frac{1}{2}} \left( \log \left( \mathcal{A}_{3}m_{1}^{\frac{1}{2}} \right) \right)^{d+2}
\end{align*}
where $\mathcal{A}_{6}:= 36C_{*}^{2}\mathcal{A}_{3}^{-1}+2304(4M+2C_fC)^{2}.$
This proves the desired bound when the first stage sample size $m_1$ satisfies
$$
 1 \leq \log(8C_{k,d}m_{1}) \leq \frac{3C_{*}^{2}}{4096\mathcal{A}_{4}\mathcal{A}_{3}M^{2}}m_{1}^{\frac{1}{2}} \textrm{ and } \lfloor\mathcal{A}_{3}m_{1}^{\frac{1}{2}}\rfloor \geq 3,$$
 we can see that this restriction on $m_1$ is satisfied when $m_1 \ge \mathcal{A}_7$ where $\mathcal{A}_7$ is constant depending on $C_{k,d}, C_{*}, \mathcal{A}_3, \mathcal{A}_4$ and $M$. When $m_1 < \mathcal{A}_7$, we can also easily see that 
$$\mathbb{E}\{ \mathcal{E}(\pi _{M} \varphi _{\hat{D},R,n})-\mathcal{E}(\varphi _{\rho}) \} \leq 4M^{2} \leq 4M^{2} \sqrt{ \mathcal{A}_{7} }m_{1}^{-\frac{1}{2}}\left( \log \left( \mathcal{\mathcal{A}}_{3}m_{1}^{\frac{1}{2}} \right) \right)^{d+2}$$ and thereby the desired bound still holds. This completes the proof of Theorem 5.
\end{proof}

\acks{The work described in this paper was partially supported by
Discovery Project (DP240101919) of the Australian Research Council.}

\section*{Appendix}
\begin{lemma} 
    \textbf{Ascoli-Arzelà theorem:} let $\{ f_{j} \}$ be a sequence of continuous functions from a separable topological spaces $(X, T)$ into a metric space $(Y,d_{Y})$. Assume that the functions $\{ f_{j} \}$ are equi-bounded and equi-continuous at each $x \in X$. Then, there exists a subsequence $\{ f_{j'} \}\subset \{ f_{j} \}$ and a continuous function $f: X \to Y$ such that $\{ f_{j'} \} \to f$ pointwise in $X$. Moreover the convergence is uniform on compact subsets of $X$.
\end{lemma}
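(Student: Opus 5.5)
We will run the classical diagonal argument on a countable dense set. Equicontinuity will then upgrade pointwise convergence on that set to pointwise convergence everywhere and to uniform convergence on compact sets.

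Throughout, \emph{equi-bounded} is read so that for each $x\in X$ the set $\{f_j(x)\}$ is relatively compact in $Y$. In the application $Y=\mathbb{R}$ with $|\cdot|$, where a bounded set is relatively compact and the space is complete, so this reading is automatic. \emph{Equi-continuous at $x$} means: for every $\epsilon>0$ there is an open neighbourhood $U_x$ of $x$ with $\sup_j d_Y(f_j(y),f_j(x))<\epsilon$ for all $y\in U_x$.

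\textbf{Step 1: diagonal extraction.} Fix a countable dense set $\{x_k\}_{k\ge1}\subset X$, which exists by separability. Since $\{f_j(x_1)\}$ is relatively compact, we extract a subsequence converging at $x_1$. From it we extract a further subsequence converging at $x_2$, and so on. The diagonal subsequence $\{f_{j'}\}$ then converges at every $x_k$.

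\textbf{Step 2: pointwise convergence everywhere and continuity of the limit.} Fix $x\in X$ and $\epsilon>0$, and take $U_x$ as above. By density there is some $x_k\in U_x$. Then
\[
d_Y(f_a(x),f_b(x))\le d_Y(f_a(x),f_a(x_k))+d_Y(f_a(x_k),f_b(x_k))+d_Y(f_b(x_k),f_b(x)) < 2\epsilon + d_Y(f_a(x_k),f_b(x_k)),
\]
so $\{f_{j'}(x)\}$ is Cauchy. Because the sequence is Cauchy and relatively compact (or simply because $Y$ is complete), it converges; we define $f(x)$ to be the limit. Letting $j'\to\infty$ in $d_Y(f_{j'}(y),f_{j'}(x))<\epsilon$ for $y\in U_x$ gives $d_Y(f(y),f(x))\le\epsilon$, so $f$ is continuous.

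\textbf{Step 3: uniform convergence on a compact set $K\subset X$.} Given $\epsilon>0$, the neighbourhoods $U_x$ for $x\in K$ cover $K$, so finitely many $U_{z_1},\dots,U_{z_m}$ suffice. Pointwise convergence at the finitely many points $z_1,\dots,z_m$ yields $J$ with $d_Y(f_{j'}(z_i),f(z_i))<\epsilon$ for all $i$ and all $j'\ge J$. For $y\in U_{z_i}$ we split through $z_i$:
\[
d_Y(f_{j'}(y),f(y))\le d_Y(f_{j'}(y),f_{j'}(z_i))+d_Y(f_{j'}(z_i),f(z_i))+d_Y(f(z_i),f(y))<3\epsilon .
\]
The first term is below $\epsilon$ by equicontinuity, and the last is at most $\epsilon$ by the limit bound from Step 2. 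This holds uniformly in $y\in K$ for $j'\ge J$.

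\textbf{Main obstacle.} The delicate point is Step 2: deducing convergence at non-dense points. This needs the interplay of equicontinuity with density, together with completeness or relative compactness of the pointwise orbits in $Y$. For a general metric $Y$ with mere boundedness the statement fails, so the argument must use the setting $Y=\mathbb{R}$ in which the lemma is applied, or the relative-compactness reading of \emph{equi-bounded}.
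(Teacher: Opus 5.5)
Your proof is correct. It is the standard diagonal-extraction argument, and each step holds: the Cauchy estimate through a dense point $x_k\in U_x$, the passage to the limit that gives continuity of $f$, and the finite-subcover $3\epsilon$ bound on a compact $K$.

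The paper itself gives no proof of this lemma. It is quoted in the appendix as the classical Ascoli--Arzel\`a theorem and used only in the proof of Theorem 3, with $Y=\mathbb{R}$. So there is no competing route to compare against.

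Your caveat is right and worth stating explicitly. Read literally, with $Y$ an arbitrary metric space and \emph{equi-bounded} meaning merely bounded, the lemma is false. For example, take constant maps $f_j\equiv e_j$ into $\ell^2$, or $f_j\equiv 1/j$ into $Y=(0,1]$; no subsequence converges in $Y$. The pointwise relative-compactness hypothesis you adopt is exactly what is needed. It holds automatically in the paper's application. There, moreover, $X=(P(\Omega),\gamma_k)$ is a compact metric space, so your Step 3 gives uniform convergence on all of $X$.
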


\begin{lemma}
    \label{lem:1} \citep{sriperumbudur2010hilbert}.
    Let $(\Omega,m)$ be a compact metric space. If $k$ is universal, then $\gamma _{k}$ metrizes the weak topology on $P(\Omega)$.
\end{lemma}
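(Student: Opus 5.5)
The plan is to show that the identity map on $P(\Omega)$ is a homeomorphism between the weak topology and the topology induced by $\gamma_k$. The argument uses nets throughout, so no prior metrizability of the weak topology is needed. There are three ingredients: $\gamma_k$ is a genuine metric, weak convergence implies $\gamma_k$-convergence, and $\gamma_k$-convergence implies weak convergence.

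\emph{$\gamma_k$ is a metric.} If $\gamma_k(\mathcal{P},\mathcal{Q})=0$, then $\int h\,d\mathcal{P}=\int h\,d\mathcal{Q}$ for every $h\in\mathcal{H}_k$. By universality $\mathcal{H}_k$ is dense in $C(\Omega)$, so the equality extends to all $g\in C(\Omega)$. The Riesz representation theorem then gives $\mathcal{P}=\mathcal{Q}$. As a by-product, the $\gamma_k$-topology is Hausdorff.

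\emph{Weak $\Rightarrow$ $\gamma_k$.} Let $\mu_\alpha\to\mu$ weakly. Using the representation of $\gamma_k$ from \citep{gretton2006kernel},
$$\gamma_k^2(\mu_\alpha,\mu)=\iint k\,d\mu_\alpha d\mu_\alpha-2\iint k\,d\mu_\alpha d\mu+\iint k\,d\mu\, d\mu.$$
Since $k$ is continuous on the compact set $\Omega\times\Omega$, Stone--Weierstrass gives, for each $\varepsilon>0$, finite sums $\sum_i a_i(x)b_i(y)$ with $a_i,b_i\in C(\Omega)$ and $\|k-\sum_i a_i\otimes b_i\|_\infty<\varepsilon$. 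For such a product kernel each double integral factors as $\sum_i\int a_i\,d\mu_\alpha\int b_i\,d\mu_\alpha$ (and analogously for the mixed term), which converges along the net by weak convergence. The $\varepsilon$-error contributes at most $4\varepsilon$ uniformly in $\alpha$. Hence $\gamma_k(\mu_\alpha,\mu)\to0$, so the identity from the weak topology to the $\gamma_k$-topology is continuous.

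\emph{$\gamma_k$ $\Rightarrow$ weak.} Let $\gamma_k(\mu_\alpha,\mu)\to0$ and fix $g\in C(\Omega)$ and $\varepsilon>0$. By universality choose $h\in\mathcal{H}_k$ with $\|g-h\|_\infty<\varepsilon$. Then
$$\Big|\int g\,d(\mu_\alpha-\mu)\Big|\le 2\varepsilon+\|h\|_{\mathcal{H}_k}\,\gamma_k(\mu_\alpha,\mu),$$
using the variational definition of $\gamma_k$ over the unit ball of $\mathcal{H}_k$. The right-hand side is eventually below $3\varepsilon$, so the identity is continuous in the reverse direction as well. Together with the previous step, this shows the two topologies coincide.

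The main obstacle is the weak $\Rightarrow\gamma_k$ step: one must avoid invoking weak convergence of product measures for nets, and the Stone--Weierstrass tensor approximation handles this cleanly. An alternative is available: $P(\Omega)$ is weakly compact (Banach--Alaoglu/Prokhorov) and the $\gamma_k$-topology is Hausdorff. So the continuous bijection given by the identity from the weak topology to the $\gamma_k$-topology is automatically a homeomorphism. This makes the $\gamma_k\Rightarrow$weak step redundant, but it is harmless to keep it as a check.
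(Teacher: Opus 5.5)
Your proof is correct. The $\gamma_k\Rightarrow$ weak half is exactly the paper's argument: approximate $g$ by $h\in\mathcal{H}_k$ and bound by $2\varepsilon+\|h\|_{\mathcal{H}_k}\gamma_k$.

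The weak $\Rightarrow\gamma_k$ half is where you genuinely differ. The paper calls this direction ``trivial'' because $\|g\|_\infty\le\|k\|_\infty\|g\|_{\mathcal{H}_k}$. That bound only gives uniform boundedness of the RKHS unit ball. What is actually needed is that $\int g\,d(\mathcal{P}_n-\mathcal{P})\to 0$ \emph{uniformly} over that ball, and uniform boundedness does not deliver this. For instance, the same bound holds for the sup-norm unit ball of $C(\Omega)$, where the supremum is the total variation distance; $\delta_{1/n}\to\delta_0$ weakly on $[0,1]$ while that distance stays $2$.

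Your route supplies exactly the missing uniformity: the double-integral formula for $\gamma_k^2$ combined with a Stone--Weierstrass tensor approximation of $k$. It also uses only continuity of $k$, not universality. An equally valid repair would be equicontinuity of the unit ball, via $|h(x)-h(y)|\le\|h\|_{\mathcal{H}_k}\,(k(x,x)-2k(x,y)+k(y,y))^{1/2}$, together with Arzel\`a--Ascoli.

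You also gain in two other places. You verify directly that $\gamma_k$ is a metric, which the paper imports from Gretton et al. You work with nets, whereas the paper's sequential argument tacitly relies on the weak topology on $P(\Omega)$ being metrizable.

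Finally, your compactness remark shows something about the structure of the proof. Once $\gamma_k$ is known to be a metric (hence Hausdorff), the direction the paper proves carefully is the redundant one. The direction it calls trivial is the one carrying the real content.
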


\begin{proof}
    We need to show the topology induced by $\gamma _{k}$ is equivalent to the weak topology, i.e., for measures $\{\mathcal{P}_{n}\} \subset P(\Omega)$, $\mathcal{P}_{n} \stackrel{w}{\rightarrow} \mathcal{P}$ iff $\gamma _{k}(\mathcal{P}_{n},\mathcal{P}) \to 0$ as $n \to \infty$. 
First, since $k$ is universal, $\mathcal{H}_{k}$ is dense in $C_{}(\Omega)$. Then for every $g \in C_{}(\Omega)$ and every $\epsilon>0$, there exists a $g' \in \mathcal{H}_{k}$ such that $\left\| g-g' \right\|_{\infty} \leq \epsilon$. Therefore, we have 
\begin{align*}
\left| \int  _{\Omega}g \, d\mathcal{P}_{n} - \int  _{\Omega}g \, d\mathcal{P}   \right| &=\left| \int  _{\Omega} (g-g') \, d\mathcal{P}_{n} + \int  _{\Omega}g' \, d(\mathcal{P}_{n}-\mathcal{P}) + \int  _{ \Omega}(g'-g) \, d\mathcal{P}   \right| \\
&\leq \int _{\Omega} \left| g-g' \right| \, d\mathcal{P}_{n} + \int _{\Omega}  \left| g'-g \right| \, d\mathcal{P} + \left\| g' \right\|_{\mathcal{H}_{k}} \gamma _{k} (\mathcal{P}_{n},\mathcal{P})   \\
& \leq 2 \epsilon+ \left\|  g' \right\|_{\mathcal{H}_{k}} \gamma _{k}(\mathcal{P}_{n},\mathcal{P}).
\end{align*}

Since $\epsilon$ can be arbitrarily small, $\gamma _{k}(\mathcal{P}_{n},\mathcal{P}) \to 0$ implies that $\mathcal{P}_{n} \stackrel{w}{\rightarrow}\mathcal{P}$.
It's also trivial that $\mathcal{P}_{n} \stackrel{w}{\rightarrow}\mathcal{P}$ implies $\gamma _{k}(\mathcal{P}_{n},\mathcal{P})\to 0$, since $\left\| g \right\|_{\infty} \leq \left\| k \right\|_{\infty}\left\|  g \right\|_{\mathcal{H}_{k}}$ for $g \in \mathcal{H}_{k}$.
\end{proof}

\end{document}